\newtheorem{defi}{Definition}
\newtheorem{lemma}{Lemma}
\newtheorem{prop}{Proposition}
\newtheorem{thm}{Theorem}
\newtheorem{coro}{Corollary}
\newcommand{\cC}[0]{%
\mathcal{C}}
\title{Complexity of Grammar Induction for Quantum Types}
\author{Antonin Delpeuch
    \institute{École Normale Supérieure\\
        45 rue d'Ulm \\
    75005 Paris, France}
\email{antonin.delpeuch@ens.fr}}
\begin{document}

\date{\today}
\maketitle

\begin{abstract}
    Most categorical models of meaning use a functor from the syntactic category to
    the semantic category. When semantic information is available,
    the problem of grammar induction can therefore
    be defined as finding preimages of the semantic types under this
    forgetful functor, lifting the information flow from the semantic
    level to a valid reduction at the syntactic level.
    We
    study the complexity of grammar induction, and show that for a variety
    of type systems, including pivotal and compact closed
    categories, the grammar induction problem is NP-complete.
     Our approach could be extended to linguistic
     type systems such as autonomous or bi-closed categories.
\end{abstract}

\section{Introduction}

\subsection{Overview}

Category theoretic approaches to linguistics are flourishing. They provide
a convenient abstract framework for both syntax and semantics \cite{coecke2013lambek}, and
these insights enable some progress on natural language processing
tasks \cite{grefenstette2011experimental}.
This framework is flexible, because it allows for different types of grammars,
such as the Syntactic Calculus of Lambek \cite{lambek1968deductive} or Compact Bilinear Logic
\cite{preller2007free}, also known as pregroups \cite{lambek2008pregroup}.
It also allows for different kinds of
compositional semantics, which can be distributional \cite{coecke2013lambek},
Montagovian and extensional \cite{preller2005category}, Montagovian and intensional \cite{delpeuch2014natural},
or even hybrid models \cite{preller2011semantic}.
But whatever the syntax or the semantics are, these approaches rely on a functor
from the syntactic category to the semantic category to give meaning to a sentence.
\vspace{-0.3cm}
\begin{figure}[H]
    \centering
\begin{tabular}{c c c}
    $\mathcal{S}$ & $\xrightarrow{\text{\hspace{1cm}}F\text{\hspace{1cm}}}$ & $\cC$ \\
    syntax & & semantics
\end{tabular}
\end{figure}
\vspace{-0.3cm}

\noindent We propose to study the complexity of lifting the information flow at the semantic level
to a valid expression at the syntactic level. In a quantum setting, this could correspond
to representing a family of quantum circuits as (planar) string diagrams, for instance.
In a linguistic framework, this is the task of grammar induction.
Given a set of example sentences belonging to a language, the problem is
to infer a grammar of this language. Originally motivated
by the study of language acquisition by children \cite{pinker2009language}, this task has been widely
investigated in the field of formal languages
\cite{delahiguera2005bibliographical}. If the example sentences
are just raw strings, the problem is known to be intractable for most
expressive classes of grammars \cite{gold1967language}. Hence variations have been introduced,
one of them consisting in adding some semantic information about the words
in the example sentences.
In a categorical framework, words are given syntactic types, which are objects in a monoidal
category. The semantic type of a word is the image of this syntactic type under a monoidal functor
to the semantic category. The categories we will use are defined in Section~\ref{sec:catdef} and are summarised
in figure~\ref{fig:hierarchy}. Our results focus on the lower part of our hierarchy of categories,
which consists in quantum structures, whereas the linguistic type systems are higher up in the hierarchy.

\begin{figure}[H]
    \centering
\begin{tikzpicture}[every node/.style={node distance=2cm}]

    \tikzstyle{s}=[xscale=1.8,rectangle,minimum height=0.8cm]

    \node[s,minimum width=1cm] (lambek) {};
    \node at (lambek) {Bi-closed};
\node[s,below right of=lambek,minimum width=1cm] (simple) {};
\node at (simple) {Symmetric closed};
\node[s,below left of=lambek,minimum width=1.3cm] (pregroup) {};
\node at (pregroup) {Autonomous};
\node[s,below left of=pregroup,minimum width=0.9cm] (group) {};
\node at (group) {Pivotal};
\node[s,below left of=group,minimum width=1.2cm] (monoid) {};
\node at (monoid) {Self-dual pivotal};
\node[s,below right of=monoid,minimum width=2.3cm] (commonoid) {};
\node at (commonoid) {Self-dual compact closed};
\node[s,below right of=group,minimum width=1.3cm] (abelian) {};
\node at (abelian) {Compact closed};

\draw[-latex] (lambek) -- (pregroup);
\draw[-latex] (lambek) -- (simple);
\draw[-latex] (simple) -- (abelian);
\draw[-latex] (pregroup) -- (group);
\draw[-latex] (group) -- (monoid);
\draw[-latex] (monoid) -- (commonoid);
\draw[-latex] (group) -- (abelian);
\draw[-latex] (abelian) -- (commonoid);

\tikzstyle{comp}=[dashed,-latex]
\tikzstyle{refs}=[dotted,-latex]

\draw[comp,bend left] (monoid) edge node[fill=white] {Thm~\ref{thm:monoid-group}} (group);
\draw[comp,bend right] (commonoid) edge node[fill=white] {Thm~\ref{thm:commonoid-abelian}} (abelian);
\draw[refs] (commonoid) .. controls ($(commonoid)+(-5.2,0)$) and ($(commonoid)+(-5,4)$) .. node[left] {\cite{bechet2007learnability}} (pregroup);

\draw[refs,bend right] (simple) edge node[right] {\cite{dudau-sofronie2001logic}} (lambek);

\draw[comp,bend right] (abelian) edge node[fill=white] {Thm~\ref{thm:abelian-group}} (group);
\draw[comp, bend left] (commonoid) edge node[fill=white] {Thm~\ref{thm:commonoid-monoid}} (monoid);
\draw[comp] (commonoid) edge node[fill=white] {Cor~\ref{coro:commonoid-group}} (group);

\end{tikzpicture}

Plain lines are functors, dashed lines are complexity results and dotted lines are existing algorithms.
\caption{A hierarchy of type systems}
\label{fig:hierarchy}
\end{figure}

Since the grammatical correctness of a sentence is witnessed by
an arrow from the product of its syntactic types to $S$ (the type of a sentence), the problem of grammar
induction can be seen as \emph{lifting} an arrow from the semantic category to the syntactic category, as
we will see in Section~\ref{sec:probdef}.

It turns out that many instances of this problem are \emph{hard}, in the sense of computational
complexity theory. This is mainly because we require that the syntactic type assigned to
each word remains consistent among all the example sentences. This creates global
constraints which restrict the solutions of the inference problem. In Section~\ref{sec:thms},
we use this fact to reduce NP-complete problems to our grammar learning problem.

\subsection{An example}

Suppose we use a compact closed category for the semantics and a pivotal category
for the syntax. We have to infer the possible syntactic types $t_i$ based on their
images $F(t_i)$, where $F$ is the canonical monoidal functor from the free pivotal category to
the free compact closed category on a given set of generators. In the following expressions, the tensor product $\otimes$ is implicit.
\begin{align}
    & \text{Syntax}  & & t_1 & & t_2 & & t_3 & & t_4 & & \rightarrow S \\
    & \text{Semantics} & & A B C & & B^* A^* A & & C^* A^* A & & A^* S & &\rightarrow S
\end{align}
There are many different arrows of the required domain and codomain at the semantic level. One of them is

\vspace{-0.3cm}
\begin{figure}[H]
    \centering
\begin{tikzpicture}
    \foreach \t/\x in {$A$/1,$B$/2,$C$/3,$B^*$/4,$A^*$/5,$A$/6,$C^*$/7,$A^*$/8,$A$/9,$A^*$/10,$S$./11} {
        \node at (\x,0) (t\x) {\t};
    }
    \foreach \a/\b in {1/5,2/4,3/7,6/8,9/10} {
        \draw[bend right=30] (t\a.south) edge (t\b.south);
    }
    \draw (t11) -- (11,-1);
\end{tikzpicture}
\end{figure}

\vspace{-0.3cm}

As the only difference between a free compact closed category and a free pivotal category
is the symmetry, the problem bends down to finding a permutation of the basic types of
each $t_i$ such that the type reduction holds at the syntactic level. In other words, we have to
find a diagrammatic reduction without crossing, such as this one:

\vspace{-0.3cm}
\begin{figure}[H]
    \centering
\begin{tikzpicture}
    \foreach \t/\x in {$C$/1,$B$/2,$A$/3,$A^*$/4,$B^*$/5,$A$/6,$A^*$/7,$C^*$/8,$A$/9,$A^*$/10,$S$/11} {
        \node at (\x,0) (t\x) {\t};
    }
    \foreach \a/\b in {1/8,2/5,3/4,6/7,9/10} {
        \draw[bend right=30] (t\a.south) edge (t\b.south);
    }
    \draw (t11) -- (11,-1);
\end{tikzpicture}
\end{figure}

\vspace{-0.3cm}

In this particular example, one can see that it is necessary that $C$ occurs before $B$ in $t_1$.
We can add a second sentence:
\begin{align}
    & \text{Syntax}  & & t_1 & & t_5 & & t_6 & & t_7 & & \rightarrow S \\
    & \text{Semantics} & & A B C & & B^* C^* C & & A^* C^* C & & C^* S & &\rightarrow S
\end{align}
This examples forces $A$ to occur before $B$ in $t_1$.
Hence every solution of the learning problem made of these two sentences will be such that $C$ and $A$
occur before $B$ in $t_1$.
In Section~\ref{sec:thms}, this technique enables us to reduce the problem of betweenness~\cite{guttmann2006variations}
to our grammar lifting problem. This problem is known to be NP-complete.

\section{A grammar hierarchy}

\subsection{Monoidal categories as type systems}
\label{sec:typesystem}

We define how monoidal categories can be used as type systems. Both the syntactic and the semantic
categories will be seen as type systems in our induction problem.

\begin{defi}
    A \textbf{type system} $(\mathcal{C},S)$ is a strict monoidal category $\mathcal{C}$ with a distinguished object $S$
    in $\mathcal{C}$.
\end{defi}

\noindent When the object $S$ is clear from the context, the type system is simply noted $\mathcal{C}$.
The objects of this category will be used to denote types. We require the category to be monoidal, so that
we can define the sentence type as the product of the types of its words.
The distinguished object will play the role of the type for a grammatical sentence.
The arrows in the category play the role of reductions: $A$ reduces to $B$ when $\mathcal{C}(A,B)$ is not empty.

The type systems we will consider are monoidal categories with some additional structure
(which will be detailed in section~\ref{sec:catdef}),
and freely generated by a basic category, whose objects are called \textbf{basic types} and morphisms are
understood as subtyping relations: there is a morphism between two basic types $A$ and $B$ when $A$ is a subtype
of $B$.

\begin{defi}
    A \textbf{lexicon} $l$ over a set of words $W$ and a type system $(\mathcal{C},S)$ is a
    function $l : W \rightarrow \mathcal{C}$.
\end{defi}
\noindent Although it is interesting to consider the case where
multiple types can be assigned to a single word, the previous definition restricts
our lexicons to one type per word. We restruct ourselves to rigid grammars,
according to the terminology of \cite{bechet2007learnability}.

\begin{defi}
    A sequence of words $w_1, \dots, w_n \in W$ is \textbf{grammatical} for a lexicon $l$
    when $\mathcal{C}(l(w_1) \otimes \dots \otimes l(w_n), S)$ is not empty.
\end{defi}
\noindent In this definition, $S$ is the distinguished type of the underlying type system.

\begin{defi}
    A \textbf{functor} of type systems from $(\mathcal{C}_1,S_1)$ to $(\mathcal{C}_2,S_2)$
    is a functor of monoidal categories $F : \mathcal{C}_1 \rightarrow \mathcal{C}_2$ such that
    $F(S_1) = S_2$.
\end{defi}
\noindent From this definition, the following property follows immediately:
\begin{prop}
    Let $F : \mathcal{T}_1 \rightarrow \mathcal{T}_2$ be a functor of type systems.
    If a sentence $w_1, \dots, w_n$ is grammatical for the lexicon $\mathcal{L}_1$ over $\mathcal{T}_1$,
    then it is grammatical for the lexicon $F \circ \mathcal{L}_1$ over $\mathcal{T}_2$.
\end{prop}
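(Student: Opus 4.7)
The plan is to unfold the definition of grammaticality and then simply push the witnessing morphism through the functor $F$, using monoidality to control the domain and the condition $F(S_1) = S_2$ to control the codomain.

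More concretely, suppose $w_1, \dots, w_n$ is grammatical for $\mathcal{L}_1$ in $\mathcal{T}_1 = (\mathcal{C}_1, S_1)$. By the definition of grammaticality, the hom-set $\mathcal{C}_1(\mathcal{L}_1(w_1) \otimes \dots \otimes \mathcal{L}_1(w_n), S_1)$ is nonempty, so we can pick an arrow $f$ in it. The next step is to apply $F$ to $f$ to obtain an arrow $F(f)$ in $\mathcal{C}_2$. It remains to identify the domain and codomain of $F(f)$ with the data required for grammaticality of $w_1, \dots, w_n$ under $F \circ \mathcal{L}_1$.

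For the codomain, this is immediate from the defining condition $F(S_1) = S_2$. For the domain, I would invoke strict monoidality of $F$ (which is built into the notion of a functor of type systems, since our categories are strict monoidal) to write $F(\mathcal{L}_1(w_1) \otimes \dots \otimes \mathcal{L}_1(w_n)) = F(\mathcal{L}_1(w_1)) \otimes \dots \otimes F(\mathcal{L}_1(w_n)) = (F \circ \mathcal{L}_1)(w_1) \otimes \dots \otimes (F \circ \mathcal{L}_1)(w_n)$, the last equality being just the definition of composition of functions. Thus $F(f)$ is an element of $\mathcal{C}_2((F \circ \mathcal{L}_1)(w_1) \otimes \dots \otimes (F \circ \mathcal{L}_1)(w_n), S_2)$, witnessing grammaticality.

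There is no real obstacle here; the statement is essentially a functoriality observation. The only subtlety worth noting is the reliance on strict monoidality to avoid coherence isomorphisms when rewriting $F$ of a tensor product as a tensor product of $F$-images, but this is already baked into the setup of Section~\ref{sec:typesystem}.
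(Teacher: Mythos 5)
Your argument is correct and is precisely the ``immediate'' verification the paper has in mind when it states the proposition without proof: apply $F$ to a witnessing arrow, use (strict) monoidality of $F$ on the domain and $F(S_1)=S_2$ on the codomain. The one subtlety you flag — that the paper's phrase ``functor of monoidal categories'' must be read as strict monoidal to avoid inserting coherence isomorphisms — is a fair observation, though the argument would go through just as easily with strong monoidal functors by precomposing $F(f)$ with the coherence map.
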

\noindent This property expresses that if a sentence is correct at the syntactic level, then
there is a valid reduction at the semantic level.

\subsection{Various structures in monoidal categories}
\label{sec:catdef}
We now move on to the definition of the categories involved in the hierarchy of figure~\ref{fig:hierarchy}.
\begin{defi}
    A \textbf{bi-closed category} is a monoidal category
    in which for all object $B$, the functor $\_ \otimes B$ has
    a right adjoint $\_ / B$ and the functor $B \otimes \_$ has
    a right adjoint $B \backslash \_$.
\end{defi}

\noindent In other words, this means that for every pair of objects $A$, $B$, we have
morphisms $\text{eval}^l_{A,B} : B \otimes (B \backslash A) \rightarrow A$
and $\text{eval}^r_{A,B} : (A / B) \otimes B \rightarrow A$ satisfying some coherence
equations, and similarly some morphisms $A \rightarrow (A \otimes B) / B$ and
$A \rightarrow B \backslash (B \otimes A)$. Type systems built on bi-closed
categories correspond to grammars defined in the Syntactic
Calculus of Lambek.

\begin{defi}
    An \textbf{autonomous category}\footnote{Some authors use the name \textbf{compact closed category} instead,
        but this term has been used for both symmetric and planar categories. As we want to insist
        on the fact that these categories are not symmetric (contrary to some other
    ones in this article), we follow the terminology of \cite{selinger2011survey}.}
    is a monoidal category where for each object $A$,
    there are two objects, the left ($A^l$) and right ($A^r$) adjoints, equipped with
    four morphisms $\epsilon^l_A : A^l \otimes A \rightarrow 1$,
    $\epsilon^r_A : A \otimes A^r \rightarrow 1$, $\eta^l_A : 1 \rightarrow A \otimes A^l$
    and $\eta^r_A : 1 \rightarrow A^r \otimes A$ satisfying the following equalities :
    \begin{align*}
        (\epsilon^r_A \otimes 1_A) \circ (1_A \otimes \eta^r_A) = 1_A & &
        (\epsilon^l_A \otimes 1_{A^l}) \circ (1_{A^l} \otimes \eta^l_A) = 1_{A^l} \\
        (1_A \otimes \epsilon^l_A) \circ (\eta^l_A \otimes 1_A) = 1_A & &
        (1_{A^r} \otimes \epsilon^r_A) \circ (\eta^r_A \otimes 1_{A^r}) = 1_{A^r}
    \end{align*}
\end{defi}

\noindent Type systems built on a free autonomous category define pregroup grammars. For instance, let $n$ be the type of a noun phrase and $s$ be the distinguished type 
of a sentence. If we give the type $n$ to the words \emph{Mary} and \emph{John},
and the type $n^r \otimes s \otimes n^l$ to \emph{loves}, the sentence
\emph{Mary loves John} has the type $n \otimes n^r \otimes s \otimes n^l \otimes n$.
This type reduces to $s$ through the morphism
$$(\epsilon^r_n \otimes 1_s \otimes \epsilon^l_n) : n \otimes n^r \otimes s \otimes n^l \otimes n \rightarrow s$$
See \cite{lambek2008pregroup} for a linguistic presentation of pregroup grammars and \cite{preller2007free}
for the links with category theory.

The distinction between $n^l$ and $n^r$ is important at a syntactical level to reject
ill-formed sentences. For instance, we can give the type $s^r \otimes s$ to adverbs placed at the end of a sentence.
If $s^l = s^r$, then the type $s^r \otimes s = s^l \otimes s$ reduces to $1$ through $\epsilon^l_s$, hence
the adverb can be written at any place in the sentence, which does not reflect the usual rules of grammar.
As one can show that for any object $n$, $n^{rl} \simeq n \simeq n^{lr}$, the iterated adjoints of a type $n$
are of the form

$$\dots, n^{lll}, n^{ll}, n^{l}, n, n^{r}, n^{rr}, n^{rrr} \dots$$
so we can write $n^{ll} = n^{-2}, n^{l} = n^{-1}, n = n^{0}, n^{r} = n^{1}, n^{rr} = n^{2}$, and so on.

However, it makes sense to drop the distinction between left and right adjoints at the semantic level:
in terms of flow of information, an adjoint is just something that can consume a resource, no matter whether
it comes from the left or the right side.

\begin{defi}
    A \textbf{pivotal category} is an autonomous category with a monoidal natural isomorphism between $A^r$ and $A^l$.
    We set $A^* = A^l$.
\end{defi}

\noindent Pivotal categories correspond to groups, in the sense that in a free pivotal category,
two objects have an arrow between them if and only if they are equal in the corresponding
free group (where $^*$ plays the role of the inverse, hence $^*$ will be sometimes noted $^{-1}$).

The canonical morphism between the free pregroup and the free group is defined by
$$h : t_1^{e_1} \otimes \dots \otimes t_n^{e_n} \mapsto t_1^{(-1)^{e_1}} \otimes \dots \otimes t_n^{(-1)^{e_n}}$$
where $t_1^{e_1} \otimes \dots \otimes t_n^{e_n}$ is the canonical form of a pregroup element.

\begin{defi}
    A \textbf{compact closed category} is an autonomous category which is symmetric, i.e.
    for each objects $A$ and $B$ there is a monoidal natural isomorphism $s_{A,B} : A \otimes B \rightarrow B \otimes A$
    such that $s_{A,B}^{-1} = s_{B,A}$.
\end{defi}

\noindent For instance, the category of finite-dimensional vector spaces is compact closed.
One can wonder why we introduced the isomorphism $A^l \simeq A^r$ before adding the symmetries $s_{A,B}$.
The following fact explains our choice.
\begin{prop}
    Compact closed categories are pivotal.
\end{prop}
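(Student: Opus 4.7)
The plan is to exploit the symmetry $s_{A,B}$ to turn the left dual data on $A^l$ into right dual data, and then conclude by the uniqueness (up to unique isomorphism) of adjoints in a monoidal category. Concretely, I would define candidate right-dual unit and counit on the object $A^l$ by
\begin{align*}
    \tilde\epsilon_A &:= \epsilon^l_A \circ s_{A, A^l} : A \otimes A^l \to 1,\\
    \tilde\eta_A &:= s_{A, A^l} \circ \eta^l_A : 1 \to A^l \otimes A,
\end{align*}
and check the two snake equations
$(1_A \otimes \tilde\epsilon_A) \circ (\tilde\eta_A \otimes 1_A) = 1_{A^l \to \ldots}$ style identities using the triangle identities for $\epsilon^l_A,\eta^l_A$ together with naturality of $s$ and the relation $s_{A,B}^{-1} = s_{B,A}$. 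A short string-diagram argument makes this immediate: the braid introduced by $s_{A,A^l}$ in $\tilde\epsilon_A$ is cancelled, after sliding past the cup/cap, by the braid in $\tilde\eta_A$.

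Once $(\tilde\epsilon_A,\tilde\eta_A)$ is shown to exhibit $A^l$ as a right dual of $A$, I invoke the fact that in any monoidal category right duals are unique up to a canonical monoidal natural isomorphism. Applied to $A$, this produces a canonical iso $\varphi_A : A^r \xrightarrow{\sim} A^l$ defined explicitly by
\[
\varphi_A = (\tilde\epsilon_A \otimes 1_{A^l}) \circ (1_{A^r} \otimes \eta^l_A) : A^r \to A^l,
\]
with inverse built symmetrically from $\epsilon^r_A$ and $\tilde\eta_A$. That $\varphi_A$ is an isomorphism follows directly from the snake equations for both duals.

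Finally I would verify that $\varphi$ is natural and monoidal. Naturality in $A$ reduces to the fact that taking left (resp. right) duals is a contravariant monoidal functor, and that both duals pick out the same universal object up to the canonical iso. Monoidality — i.e.\ compatibility of $\varphi_{A\otimes B}$ with $\varphi_B \otimes \varphi_A$ composed with the canonical isomorphisms $(A\otimes B)^l \cong B^l \otimes A^l$ and $(A\otimes B)^r \cong B^r \otimes A^r$ — is again a straightforward snake/symmetry manipulation.

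The main obstacle is not the existence of $\varphi_A$ but the bookkeeping needed to confirm that it is \emph{monoidal natural}, since this mixes the symmetry $s$ with the two kinds of duals and requires that all the coherence diagrams (of the symmetric monoidal structure and of both autonomous structures) commute simultaneously. I would handle this step diagrammatically rather than by equational brute force, since in a compact closed category string diagrams are sound and reduce each required identity to planar isotopy of a small diagram with one braid and a few cups and caps.
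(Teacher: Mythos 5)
Your plan is sound and takes essentially the same route as the paper: both you and the author use the symmetry $s$ to relate the left and right autonomous structures, and both stop short of a full verification that the resulting isomorphism is monoidal and natural (the paper also just asserts ``one can check with similar techniques''). The difference is mostly packaging. The paper writes down $\phi_A : A^l \to A^r$ and $\psi_A : A^r \to A^l$ as explicit string diagrams containing one crossing, and verifies $\psi_A \circ \phi_A = 1_{A^l}$ and $\phi_A \circ \psi_A = 1_{A^r}$ by planar/symmetric isotopy. You instead observe that $(\tilde\epsilon_A, \tilde\eta_A)$ exhibits $A^l$ as a second right dual of $A$ and invoke uniqueness of duals, which delivers the isomorphism and its inverse for free. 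That is a slightly cleaner abstraction; the diagrams the paper manipulates are exactly the canonical comparison maps your lemma produces.

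One concrete slip: your displayed formula for $\varphi_A$ does not typecheck. You wrote
\[
\varphi_A = (\tilde\epsilon_A \otimes 1_{A^l}) \circ (1_{A^r} \otimes \eta^l_A),
\]
but $1_{A^r} \otimes \eta^l_A : A^r \to A^r \otimes A \otimes A^l$, and $\tilde\epsilon_A : A \otimes A^l \to 1$ cannot be applied to the leftmost two factors. The canonical comparison map between the two right duals $(A^l, \tilde\epsilon_A, \tilde\eta_A)$ and $(A^r, \epsilon^r_A, \eta^r_A)$ is
\[
\varphi_A = (1_{A^l} \otimes \epsilon^r_A) \circ (\tilde\eta_A \otimes 1_{A^r}) : A^r \to A^l,
\]
with inverse $(1_{A^r} \otimes \tilde\epsilon_A) \circ (\eta^r_A \otimes 1_{A^l})$. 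With that correction the proof goes through; the error is bookkeeping, not a gap in the idea. Note also that, like the paper, you defer the monoidal naturality of $\varphi$ to ``a straightforward snake/symmetry manipulation''; if you want a complete proof you should at least spell out the naturality square and the compatibility with $(A \otimes B)^l \cong B^l \otimes A^l$ once, rather than assert it.
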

\noindent This property is well known (it is stated in \cite{coecke2013lambek}, and implicitly in \cite{selinger2011survey})
but I have never seen a proof of it.
\begin{proof}
    Let $\phi_A$ and $\psi_A$ be the following morphisms :
    \vspace{-0.9cm}
    \begin{figure}[H]
    \centering
    \subfloat{
\begin{tikzpicture}[every node/.style={node distance=1cm,scale=0.8},scale=0.8]

    \node (Al0) {$A^l$};
    \node[below of=Al0] (Al1) {$A^l$};
    \node[below of=Al1] (Ar2) {$A^r$};
    \node[below of=Ar2] (Ar3) {$A^r$};
    \node[right of=Al1] (Ar1) {$A^r$};
    \node[right of=Ar1] (A1) {$A$};
    \node[right of=Ar2] (Al2) {$A^l$};
    \node[right of=Al2] (A2) {$A$};

    \draw (Al0) -- (Al1);
    \draw (Al1.south) -- (Al2.north);
    \draw (Ar1.south) -- (Ar2.north);
    \draw (A1) -- (A2);
    \draw (Ar2) -- (Ar3);
    \draw (Ar1.north) arc (180:0:0.5);
    \draw (Al2.south) arc (-180:0:0.5);

    \node[below left of=Al1] (phi) {$\phi_A =$};

\end{tikzpicture}
}
\subfloat{
\begin{tikzpicture}[every node/.style={node distance=1cm},scale=0.8]

    \node (A1) {$A$};
    \node[right of=A1] (Al1) {$A^l$};
    \node[right of=Al1] (Ar1) {};
    \node[above of=Ar1] (Ar0) {$A^r$};
    \node[below of=A1] (Al2) {$A^l$};
    \node[below of=Al1] (A2) {$A$};
    \node[right of=A2] (Ar2) {$A^r$};
    \node[below of=Al2] (Al3) {$A^l$};

    \draw (Ar0) -- (Ar2);
    \draw (A1.south) -- (A2.north);
    \draw (Al1.south) -- (Al2.north);
    \draw (Al2) -- (Al3);
    \draw (A1.north) arc (180:0:0.5);
    \draw (A2.south) arc (-180:0:0.5);

    \node[below left of=A1] (psi) {$\psi_A =$};

\end{tikzpicture}

}
\end{figure}

    \vspace{-0.3cm}
    We have $\psi_A \circ \phi_A = 1_{A^l}$ and $\phi_A \circ \psi_A = 1_{A^r}$.
    By symmetry, let us show the first equality only.

    \begin{figure}[H]
    \centering
    \subfloat{
\begin{tikzpicture}[every node/.style={node distance=1cm,scale=0.5},scale=0.5]

    \node (Al0) {};
    \node[below of=Al0] (Al1) {};
    \node[below of=Al1] (Ar2) {};
    \node[below of=Ar2] (Ar3) {};
    \node[right of=Al1] (Ar1) {};
    \node[right of=Ar1] (A1) {};
    \node[right of=Ar2] (Al2) {};
    \node[right of=Al2] (A2) {};

    \draw (Al0.center) -- (Al1.center);
    \draw (Al1.center) -- (Al2.center);
    \draw (Ar1.center) -- (Ar2.center);
    \draw (A1.center) -- (A2.center);
    \draw (Ar2.center) -- (Ar3.center);
    \draw (Ar1.center) arc (180:0:0.5);
    \draw (Al2.center) arc (-180:0:0.5);

    \node[below of=Ar3] (Br1) {};
    \node[left of=Br1] (Bl1) {};
    \node[left of=Bl1] (B1) {};
    \node[below of=B1] (Bl2) {};
    \node[below of=Bl1] (B2) {};
    \node[right of=B2] (Br2) {};
    \node[below of=Bl2] (Bl3) {};

    \draw (Ar3.center) -- (Br2.center);
    \draw (B1.center) -- (B2.center);
    \draw (Bl1.center) -- (Bl2.center);
    \draw (Bl2.center) -- (Bl3.center);
    \draw (B1.center) arc (180:0:0.5);
    \draw (B2.center) arc (-180:0:0.5);

\end{tikzpicture}
}
\subfloat{
\begin{tikzpicture}[every node/.style={node distance=1cm,scale=0.5},scale=0.5]

    \node (A1) {};
    \node[below of=A1] (B1) {};
    \node[right of=B1,node distance=3cm] (B2) {};
    \node[right of=B2] (B3) {};
    \node[below of=B1] (C1) {};
    \node[right of=C1] (C2) {};
    \node[right of=C2] (C3) {};
    \node[below of=C1] (D1) {};
    \node[right of=D1] (D2) {};
    \node[right of=D2] (D3) {};
    \node[right of=D3] (D4) {};
    \node[below of=D1] (E1) {};
    \node[right of=E1] (E2) {};
    \node[right of=E2,node distance=3cm] (E3) {};
    \node[below of=E1,node distance=1.5cm] (F1) {};

    \draw (A1.center) -- (D1.center);
    \draw (D1.center) -- (E2.center);
    \draw (D2.center) -- (E1.center);
    \draw (C2.center) -- (D3.center);
    \draw (C3.center) -- (D2.center);
    \draw (B2.center) -- (D4.center);
    \draw (B3.center) -- (E3.center);
    \draw (E1.center) -- (F1.center);
    \draw (C2.center) arc (180:0:0.5);
    \draw (E2.center) arc (-180:0:1.5);
    \draw (D3.center) arc (-180:0:0.5);
    \draw (B2.center) arc (180:0:0.5);

    \node[left of=D1] (equal) {{\Huge $=$}};

\end{tikzpicture}

}
\subfloat{
\begin{tikzpicture}[every node/.style={node distance=1cm,scale=0.5},scale=0.5]

    \node (A1) {};
    \node[below of=A1] (B1) {};
    \node[below of=B1] (C1) {};
    \node[right of=C1] (C2) {};
    \node[right of=C2] (C3) {};
    \node[below of=C1] (D1) {};
    \node[right of=D1] (D2) {};
    \node[right of=D2] (D3) {};
    \node[below of=D1] (E1) {};
    \node[right of=E1] (E2) {};
    \node[right of=E2] (E3) {};
    \node[below of=E1,node distance=1.5cm] (F1) {};

    \draw (A1.center) -- (D1.center);
    \draw (D1.center) -- (E2.center);
    \draw (D2.center) -- (E1.center);
    \draw (C2.center) -- (D3.center);
    \draw (C3.center) -- (D2.center);
    \draw (E1.center) -- (F1.center);
    \draw (C2.center) arc (180:0:0.5);
    \draw (E2.center) arc (-180:0:0.5);
    \draw (E3.center) -- (D3.center);

    \node[left of=D1] (equal) {\Huge $=$};

\end{tikzpicture}

}
\subfloat{
\begin{tikzpicture}[every node/.style={node distance=1cm,scale=0.5},scale=0.5]

    \node (A1) {};
    \node[below of=A1,node distance=3cm] (B1) {};
    \node[below of=B1] (C1) {};
    \node[right of=B1] (B2) {};
    \node[right of=B2] (B3) {};
    \node[right of=C1] (C2) {};
    \node[right of=C2] (C3) {};
    \node[below of=C1] (D1) {};
    \node[below of=E1,node distance=1.5cm] (F1) {};

    \draw (A1.center) -- (B1.center);
    \draw (B1.center) -- (C2.center);
    \draw (B3.center) -- (C1.center);
    \draw (C1.center) -- (F1.center);
    \draw (B2.center) -- (C3.center);

    \draw (B2.center) arc (180:0:0.5);
    \draw (C2.center) arc (-180:0:0.5);

    \node[left of=B1] (equal) {\Huge $=$};
\end{tikzpicture}

}
\subfloat{
\begin{tikzpicture}[every node/.style={node distance=1cm,scale=0.5},scale=0.5]

    \node (A1) {};
    \node[below of=A1,node distance=2cm] (B1) {};
    \node[below of=B1] (C1) {};
    \node[right of=B1] (B2) {};
    \node[right of=B2] (B3) {};
    \node[right of=C1] (C2) {};
    \node[right of=C2] (C3) {};
    \node[below of=C1] (D1) {};
    \node[right of=D1] (D2) {};
    \node[right of=D2] (D3) {};
    \node[below of=D1] (E1) {};

    \draw (A1.center) -- (B1.center);
    \draw (B1.center) arc (-180:0:0.5);
    \draw (B2.center) arc (180:0:0.5);
    \draw (B3.center) -- (C3.center);
    \draw (C3.center) -- (D1.center);
    \draw (D1.center) -- (E1.center);

    \node at ($(C1)+(-1,0.5)$) (equal) {\Huge $=$};
\end{tikzpicture}

}
\subfloat{
\begin{tikzpicture}[every node/.style={node distance=1cm,scale=0.5},scale=0.5]
    \node (A1) {};
    \node[below of=A1,node distance=2.5cm] (B1) {};
    \node[below of=B1,node distance=2.5cm] (C1) {};
    
    \draw (A1.center) -- (C1.center);

    \node[left of=B1] (equal) {\Huge $=$};
\end{tikzpicture}
}

\end{figure}

    Moreover, one can check with similar techniques that this isomorphism is monoidal and natural.
\end{proof}

\begin{defi}
    A \textbf{self-dual compact closed category} is a compact closed category with
    a family of isomorphisms $h_A : A \rightarrow A^*$.
\end{defi}

\noindent Self-dual compact closed categories have been studied in detail by Selinger
in \cite{selinger2010autonomous}.
The definition we adopt here corresponds to his first option, namely self-duality without coherence.
As a finite-dimensional vector space is isomorphic to its dual, the category of
finite-dimensional vector spaces is self-dual.
This category has been widely used as the underlying semantic category
for models of meaning, such as in 
\cite{coecke2013lambek}, \cite{preller2011semantic} or \cite{delpeuch2014natural}.
The objects in this category have also been used in \cite{bechet2007learnability} as
semantic types in a learning task. However, they did not introduce a whole typing
system at the semantic level, as they had no notion of reduction on semantic types.

We have introduced the commutativity first and then the isomorphism between $A$ and $A^*$.
It is possible to swap these properties, although it requires to be more careful: 
\begin{defi}
    A free \textbf{self-dual pivotal category} is the free pivotal category generated by
    a category $\mathcal{C}$ where for each object $A \in \mathcal{C}$, $A \simeq A^*$.
\end{defi}

\noindent A self-dual pivotal category models a rewriting system where any two identical adjacent letters
cancel.

It is important to notice that we require that $A \simeq A^*$ only for basic objects.
If this were true for all objects, then as noted by Selinger \cite{selinger2011survey}, we
would get the following isomorphism
$$A \otimes B \simeq (A \otimes B)^* \simeq B^* \otimes A^* \simeq B \otimes A$$.
This isomorphism is not a symmetry in general but would have the same effects on our type system.

A widespread category for semantic types in the linguistic literature
is the free symmetric monoidal closed category.
It has been used, among others, in \cite{buszkowski1984note} and \cite{dudau2003learnable}.
\begin{defi}
    A \textbf{symmetric closed category} is
    a symmetric bi-closed category.
    For all objects $A$ and $B$, $B \backslash A \simeq A / B$,
    so we note $A | B = A / B$.
\end{defi}

\noindent The objects of this category can be thought of simple types for the
simply-typed $\lambda$-calculus with pairs. The object $A | B$ plays the role of
the type $B \rightarrow A$
and we have a morphism $\text{eval}_{A,B} : (A | B) \otimes B \rightarrow A$
satisfying the required coherence conditions.

\section{Functional types}
\subsection{Restricting the set of possible types}

Not all types are likely to be used in a type-logical grammar.
We expect types to be functional, i.e. to be built using only
abstractions, the operations $\backslash$ and $/$.

For instance, the type $n \otimes s \otimes n$ belongs to the free pregroup generated
by $n$ and $s$, but cannot be constructed by iterated abstractions.
The type $n^r \otimes s \otimes n^l$ however can be constructed as $n \backslash (s / n)$ or
$(n \backslash s) / n$.

\begin{defi}
    Let $\mathcal{L}$ be the free bi-closed monoidal category. The set $P \subset Ob(L)$ is the closure
    by $/$ and $\backslash$ of the set of basic types.
    Given a type system $(\mathcal{C},S)$ and a bi-closed functor $F : \mathcal{L} \rightarrow \mathcal{C}$
    the set of \textbf{functional types} in $\cC$ is $F(P)$.
\end{defi}

Restricting our search of types to this form of type reduces our search space.
This restriction makes sense because these types are more likely to be relevant
from a linguistic point of view.
For instance, \cite{lambek2008pregroup} builds a fairly advanced grammar of English and he uses
only functional types in his grammar, while not mentioning this constraint at all.

\subsection{Properties of functional types}

The generative power of pregroup grammars is not reduced when we require
functional types: the proof given in \cite{buszkowski2008pregroup} that every $\epsilon$-free
context free grammar is weakly equivalent to a pregroup grammar uses only functional types.

For group grammars (i.e. type systems built on pivotal categories), restricting the assignments
to functional types does not harm the expressiveness either, as it is enough to multiply by $a^{-1}a$
the types that are not functional to get an equivalent grammar with functional types only. This remark
will be made clear by the following proposition, which characterises functional types in pivotal categories.

\begin{prop}
    In a pivotal category, functional types are exactly those which are either
    \begin{itemize}
        \item basic types (generators of the free autonomous category), or
        \item products of basic types with exponents $t_1^{e_1} \otimes \dots \otimes t_n^{e_n}$,
            where at least one $e_i$ is $-1$ and at least one $e_i$ is $+1$.
    \end{itemize}
\end{prop}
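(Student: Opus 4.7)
The plan is to prove both inclusions by structural induction: in a pivotal category, functional types are generated from basic types by the two operations $A / B = A \otimes B^*$ and $B \backslash A = B^* \otimes A$, so the forward direction will be an induction on this construction, while the reverse direction will be an induction on the length $n$ of the exponent sequence representing $T$.

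For the forward direction I would show that the property ``basic or mixed-sign'' is preserved by $A \otimes B^*$ and $B^* \otimes A$. By induction hypothesis $A$ and $B$ are each either basic or mixed, and in either case the exponent sequence contains at least one $+1$; so both $A$ and $B$ contain a $+1$, and $B^*$ (which reverses and negates $B$'s exponents) consequently contains a $-1$. Hence $A \otimes B^*$ has a $+1$ contributed by $A$ and a $-1$ contributed by $B^*$, with total length at least $2$, and so it is mixed. The case of $B^* \otimes A$ is symmetric.

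The converse---that every mixed-sign $T = t_1^{e_1} \otimes \dots \otimes t_n^{e_n}$ with $n \geq 2$ is functional---is the main work. I plan to induct on $n$ and at each step strip off either $t_1^{e_1}$ or $t_n^{e_n}$. Setting $T' = t_2^{e_2} \otimes \dots \otimes t_n^{e_n}$ and $T'' = t_1^{e_1} \otimes \dots \otimes t_{n-1}^{e_{n-1}}$, the four candidate moves are $T = t_1 \backslash T'$ when $e_1 = -1$, $T = T'' / t_n$ when $e_n = -1$, $T = t_1 / (T')^*$ when $e_1 = +1$, and $T = (T'')^* \backslash t_n$ when $e_n = +1$. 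Each writes $T$ as a tensor of a basic type (trivially functional) with a strictly shorter factor. The induction hypothesis closes the argument as soon as the shorter factor is itself basic or mixed: for the two $-1$-stripping moves this means $T'$ or $T''$ must be basic or mixed directly, while for the two $+1$-stripping moves the shorter factor is $(T')^*$ or $(T'')^*$, which is basic or mixed precisely when $T'$ or $T''$ is anti-basic (a single $-1$) or mixed.

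The main obstacle is therefore combinatorial: show that for any sign pattern containing both $+1$ and $-1$, at least one of the four moves has an admissible residual. I would dispatch this by case analysis on the boundary pair $(e_1, e_n)$. When $e_1 = e_n = +1$ the mixed-sign assumption forces $n \geq 3$ and places a $-1$ strictly in the interior, so stripping either endpoint leaves a residual of length $n-1 \geq 2$ containing both an interior $-1$ and a boundary $+1$, hence mixed, and either of the $+1$-stripping moves applies. The three remaining corner cases $(+,-)$, $(-,+)$, $(-,-)$ are handled analogously: the two moves applicable in each case fail simultaneously only if their two residuals were both of constant sign, which would force $T$ itself to have constant sign, contradicting mixedness. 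With this combinatorial lemma in hand the induction closes.
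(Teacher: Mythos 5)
Your proof is correct and takes essentially the same route as the paper: both directions are the same inductions, and your four endpoint-stripping moves $t_1 \backslash T'$, $T'' / t_n$, $t_1 / (T')^*$, $(T'')^* \backslash t_n$ are exactly the paper's four cases $F(t_1 \backslash u)$, $F(u / t_n)$, $F(t_1 / u)$, $F(u \backslash t_n)$. You organize the case analysis by the boundary pair $(e_1, e_n)$ and isolate the sign-pattern claim as a separate combinatorial lemma, while the paper splits on $e_n$ alone with sub-cases, but this is a purely presentational difference.
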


\begin{proof}
    By induction on a functional Lambek type $t$, let us show that $F(t)$ satisfies the characterization above.
    If $t = a$, a basic type, then $F(t) = a$, falling into the first option.
    If $t = u / v$, then $F(t) = F(u)F(v)^{-1}$. By induction, there is a basic type occurring with a $+1$ exponent
    in $F(u)$, so it occurs again with the same exponent in $F(t)$. Similarly, there is a basic type occurring with
    a $+1$ exponent in $F(v)$, so it occurs with a $-1$ exponent in $F(t)$.

    Conversely, let us show by induction on the length of a group type $t = t_1^{e_1} \otimes \dots \otimes t_n^{e_n}$
    satisfying the characterization that it is the image of a functional Lambek type.
    If $n = 1$, then $t = a$ where $a$ is a basic type, so $F(a) = t$.
    If $n > 1$, there are several cases:
    \begin{itemize}
        \item $e_n = -1$ and $e_1, \dots, e_{n-1}$ satisfies the characterization. Then by induction
            we can find a functional Lambek type $u$ such that $F(u) = t_1^{e_1} \otimes \dots \otimes t_{n-1}^{e_{n-1}}$
            and hence $F(u / t_n) = F(u)F(t_n)^{-1} = t$.
        \item $e_n = -1$ and $e_1, \dots, e_{n-1} = +1$ : then $(-e_2), \dots, (-e_n)$ satisfies the
            characterization and hence there is a functional $u$ such that $F(u)^{-1} = t_2^{e_2} \otimes \dots
            \otimes t_n^{e_n}$, hence $F(t_1 / u) = t$.
        \item if $e_n = +1$ and $(-e_1), \dots, (-e_{n-1})$ satisfies the characterization. Then
              by induction we can find a functional Lambek type $u$ such that $F(u)^{-1} = t_1^{e_1} \otimes \dots \otimes t_{n-1}^{e_{n-1}}$ and hence $F(u \backslash t_n) = F(u)^{-1}F(t_n) = t$
          \item if $e_n = +1$ and $e_1, \dots, e_{n-1} = -1$: then $e_2, \dots, e_n$ satisfies the characterization and hence there is a functional $u$ such that $F(u) = t_2^{e_2} \otimes \dots t_n^{e_n}$, hence $F(t_1 \backslash u) = F(t_1)^{-1}F(u) = t$.
    \end{itemize}
    This completes the proof.
\end{proof}

\begin{coro}
    In a compact closed category, the characterization of functional types is the same.
    In a self-dual compact closed category, every type but $1$ is functional.
\end{coro}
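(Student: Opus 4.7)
The plan is to derive both claims from the previous proposition, using the fact that compact closed categories are pivotal.

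For the first claim, since a compact closed category $\cC$ is pivotal (by the proposition above), the induction of the previous proposition applies verbatim. In particular, the inductive construction of a functional Lambek type from a pivotal type satisfying the characterization uses only the operations $/$ and $\backslash$, both of which are defined in any bi-closed category, and in $\cC$ are given concretely by $A/B = A \otimes B^*$ and $B \backslash A = B^* \otimes A$. The symmetry of $\cC$ permutes tensor factors, but the criterion ``basic type, or at least one $+1$ and one $-1$ exponent'' depends only on the multiset of signed basic types appearing in the reduced form, hence is invariant under these permutations.

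For the second claim, let $\cC$ be self-dual compact closed and let $T$ be a type with $T \not\simeq 1$. Writing $T$ in reduced form as $A_1^{e_1} \otimes \dots \otimes A_n^{e_n}$ with $n \geq 1$, each $A_i$ basic and $e_i = \pm 1$, we consider two cases. If $n = 1$, then $T \simeq A_1$ using the self-dual isomorphism $A_1 \simeq A_1^*$ if $e_1 = -1$, so $T$ is isomorphic to a basic type and hence functional. If $n \geq 2$, we use the self-dual isomorphisms $A_i \simeq A_i^*$ to flip exponents until at least one is $+1$ and at least one is $-1$; by the first claim, the resulting expression is functional, and $F(P)$ is invariant under isomorphism in $\cC$ since $F$ is a functor.

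The main obstacle lies in the second part: one must verify that replacing $A_i^{e_i}$ by $A_i^{-e_i}$ genuinely yields an isomorphic type in $\cC$, and that ``functional'' is a property of the isomorphism class. Both points follow from the definition of self-dual compact closed categories and the functoriality of $F$, but they are what allow the case $n \geq 2$ to always be reduced to the first claim. An additional subtlety, inherited from the previous proposition, is the special status of $1$: one should note that the characterization implicitly works with non-empty reduced forms, and that the exclusion of $1$ in the self-dual statement matches this convention.
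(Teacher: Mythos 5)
The paper gives no proof of this corollary, so your argument is being assessed on its own merits; the idea is sound, but there is an imprecision in the key step that you yourself flagged.

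Your first claim is fine: the image of the free pivotal category in the free compact closed category adds only symmetries, and the characterization (``basic type, or mixed $\pm 1$ exponents'') is a statement about the multiset of signed generators, hence unaffected. For the second claim, the strategy is right — flip exponents via $h_A : A \to A^*$ until the characterization is met — but the justification that ``$F(P)$ is invariant under isomorphism in $\cC$ since $F$ is a functor'' is not a valid inference, and in fact it is false as stated. Functoriality does not make the image of a functor closed under isomorphism: in the free self-dual compact closed category, $F(P)$ is literally the set of signed words satisfying the proposition's characterization, and e.g.\ $a \otimes b$ (both exponents $+1$, $a \neq b$) is not in it even though $a \otimes b \simeq a \otimes b^* \in F(P)$. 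The corollary must therefore be read as a statement about isomorphism classes: every object other than $1$ is \emph{isomorphic to} a functional type. Once you phrase it this way, the closure-under-isomorphism issue evaporates — you have exhibited $T' \simeq T$ with $T' \in F(P)$, which is exactly what is wanted — and your $n=1$ and $n \geq 2$ cases go through. (For completeness you should also note that $1$ is not isomorphic to any non-empty signed word, so it really is excluded.) In short: keep the construction, drop the appeal to functoriality, and state the conclusion up to isomorphism.
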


\section{Complexity of the grammar induction problem}
\label{sec:thms}

\subsection{Definition of the problem}
\label{sec:probdef}

We study the complexity of learning syntactic types based on positive samples (i.e. a set of grammatical
sentences) with semantic types. Each word occurrence in the samples comes with a semantic type.
The nature of the syntactic and semantic types depends on the problem.

\begin{defi}
    A \textbf{training sample} for a type system $(\mathcal{C},S)$ and a finite set of variables $V$
    is a finite set of sentences, where each sentence is a finite sequence of the form
    $(v_1,t_1), \dots, (v_n,t_n)$, where $v_i \in V$ and $t_i \in \mathcal{C}$ is functional,
    and such that all the sentences are grammatical for their respective type assignment.
\end{defi}

\noindent Note that we do not require that a variable is always paired with a single type.
The type of the word can depend on the context in which it appears.

In the following sections, we study the complexity of inducing a grammar, given 
a finite training sample. First we give a definition of the problem.

\begin{defi}
    Let $(\mathcal{C},S)$ be the syntactic type system, $(\mathcal{C}',S')$ be the semantic type system
    and $F$ be a morphism from $(\mathcal{C},S)$ to $(\mathcal{C}',S')$.
    We call \textbf{grammar induction} the problem of, given a training sample $T$
    for the type system $(\mathcal{C}',S')$,
    find a lexicon $h : V \times Ob(\mathcal{C}') \rightarrow \mathcal{C}$
    such that
    $$\text{for all pair } (v_i,t_i) \in T, F(h(v_i,t_i)) \simeq t_i\text{ and $h(v_i,t_i)$ is functional}$$
    $$\text{and for all sentence }(v_1,t_1), \dots, (v_n,t_n) \in T, \mathcal{C}(h(v_1,t_1) \otimes \dots \otimes h(v_n,t_n), S) \text{ is not empty}$$
\end{defi}

\noindent In other words, the problem is to find functional syntactic types that are compatible with the semantic types and
all the sentences are grammatical at the syntactic level.
Note that we require that each pair (variable, semantic type) is associated to an unique
syntactic type, following \cite{bechet2007learnability}. Without this restriction, the problem
is trivial as the syntactic types can be chosen independently for each sentence.

\subsection{Learning pivotal categories from compact closed categories}

\begin{thm} \label{thm:abelian-group}
    Type inference from a compact closed category to a pivotal category is NP-complete.
\end{thm}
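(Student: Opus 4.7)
The plan is to establish NP membership by an easy verifier, then show NP-hardness by reduction from the Betweenness problem (known NP-complete \cite{guttmann2006variations}), following the gadget style sketched in the introduction.

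For membership in NP, one guesses a candidate lexicon $h$ of polynomial size. For each pair $(v_i,t_i) \in T$, the equality $F(h(v_i,t_i)) \simeq t_i$ is tested by reducing both sides to their free-group normal form in polynomial time, and functionality of $h(v_i,t_i)$ is checked via the characterization established in the previous section. Grammaticality of a sentence is then the question of whether the concatenation of its syntactic types admits a planar (non-crossing) matching of adjoint pairs with a single unmatched $S$ at the correct position; this is a standard Dyck-bracket problem, solvable in polynomial time by dynamic programming on subintervals.

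For NP-hardness, reduce from Betweenness. Given a ground set $X = \{x_1,\dots,x_n\}$ together with triples $\{(a_k,b_k,c_k)\}$, introduce one basic type $X_i$ per element. A distinguished word $w_0$ is assigned the semantic type $X_1 \otimes \cdots \otimes X_n$; since the pivotal-to-compact-closed functor preserves the multiset of signed basic types, any syntactic lift of $w_0$ must be a permutation of $X_1,\dots,X_n$, and this permutation plays the role of the linear order required by Betweenness. For each triple $(a,b,c)$ build a gadget sentence involving $w_0$ together with a few fresh auxiliary words whose semantic types are chosen so that planarity of the reduction forces $X_b$ to sit between $X_a$ and $X_c$ in the permutation, exactly in the spirit of the two example sentences given in the introduction. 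Because the lexicon is rigid, $w_0$ receives one and the same syntactic type across every sentence, so all the betweenness constraints must be satisfied simultaneously by a single permutation; this yields a valid Betweenness order if and only if the training sample admits a lift.

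The delicate step is the gadget design: one must produce, for each triple, a constant number of sentences that encode precisely the disjunction $a<b<c \; \vee \; c<b<a$ and no extra order relations, using only functional auxiliary types whose own lifts are forced (or can always be chosen) consistently. This will typically require two symmetric sub-gadgets per triple, since a single gadget analogous to the two introductory sentences at best forces "$b$ after both $a$ and $c$" rather than true betweenness; the reverse-direction gadget is obtained by mirroring the role of the anchor types. Once the gadgets are in place, both directions of the equivalence follow from a routine case analysis on the non-crossing matchings, and the reduction is clearly polynomial in the size of the Betweenness instance.
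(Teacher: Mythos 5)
Your overall strategy matches the paper's: membership via a polynomial-time verifier, hardness via a reduction from Betweenness with a distinguished "anchor'' word whose lift encodes the linear order. The problem is the step you yourself flag as delicate: you never actually construct the gadget, and the direction you propose for it would fail.

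The obstruction is structural, not a matter of case analysis. In a rigid-lexicon induction problem, every extra sentence you add is a \emph{conjunctive} constraint on the same lifted types, so "two symmetric sub-gadgets per triple'' as you describe them cannot encode the disjunction $a<b<c \lor c<b<a$: a gadget that forces $a<b$ together with a mirrored gadget that forces $b<c$ simply pins down $a<b<c$ and rules out the reversed order, and any attempt to weaken each sub-gadget to "one of $a,c$ is left of $b$'' is itself a disjunction you would still need to encode. The paper's proof introduces the missing mechanism: it puts a subtyping morphism $a \rightarrow d_{a,b,c}$ and $c \rightarrow d_{a,b,c}$ into the base category, then builds a \emph{single} sentence per triple
$$
(Y,y)\;(W_{a,b,c,1},c_1)\;(W_{a,b,c,2},c_2)\;(W_{a,b,c,3},c_1)\;(W_{a,b,c,4},w^{-1}),
$$
with $c_1 = d_{a,b,c}^{-1} w^{-1} w$, $c_2 = b^{-1} w^{-1} w$ and $w=\prod_{x\in A\setminus\{a,b,c\}} x$. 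Because $b$ cannot appear in the lifts of $W_{a,b,c,1}$ and $W_{a,b,c,3}$, planarity forces each of those words to consume one occurrence of $a$ or $c$ via the $d_{a,b,c}$ subtype, one on the left of $b$ and one on the right; since $y'$ has no repeated basic types, this yields exactly $a<b<c$ or $c<b<a$ with no extra relations. The choice of which of $a,c$ matches which occurrence of $d_{a,b,c}$ is where the disjunction lives. Without this (or some equivalent non-determinism in the matching), the reduction does not go through, so your proposal as written has a genuine gap at precisely the step you defer.

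Two smaller points: the paper also needs the forward direction, i.e.\ explicit functional lifts of the auxiliary words showing that every Betweenness-satisfying order does produce a grammatical sample (it gives concrete types for $W_{a,b,c,1},W_{a,b,c,2},W_{a,b,c,3}$); and your NP-membership argument, while more detailed than the paper's one-line remark, is fine.
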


\begin{proof}
    We give a reduction of the betweenness problem \cite{guttmann2006variations}
    to our grammar induction problem.
    The betweenness problem is as follows. Given a finite set $A$ and a set of triples $C \subset A^3$,
    the problem is to find a total ordering of $A$ such that for each $(a,b,c) \in C$, either $a < b < c$ or $c < b < a$.
    This problem is NP-complete \cite{guttmann2006variations}.

    The compact closed category we will consider contains the objects $a$ for each $a \in A$
    and $d_{a,b,c}$ for each $(a,b,c) \in C$, with the following reduction between
    basic types: $a \rightarrow d_{a,b,c}$ and $c \rightarrow d_{a,b,c}$.
    We set $y = \prod_{x \in A} x$. The preimage of this type will define the total order satisfying 
    the constraints induced by the sentences.
    For each triple $(a,b,c) \in C$, we define the following compact closed types:
    \begin{align*}
        w = \prod_{x \in A \backslash \{a,b,c\}} x & & c_1 = d_{a,b,c}^{-1} w^{-1} w & & c_2 = b^{-1} w^{-1} w 
    \end{align*}
    and add the following sentence to the training sample:
    $$(Y,y) (W_{a,b,c,1},c_1) (W_{a,b,c,2},c_2) (W_{a,b,c,3},c_1) (W_{a,b,c,4}, w^{-1})$$
    where the $W$ are words chosen to be different from any word previously seen.
     
    This reduction is polynomial.
    Let us show that this grammar induction problem has a solution if and only if
    the corresponding betweenness problem has a solution.
    If there is a total ordering $<$ of $A$ satisfying the constraints, let $A = \{ x_1, \dots, x_n \}$
    where $x_1 < \dots < x_n$.
    One can check that with the following preimages, the sample is grammatical in the pivotal category:
    \begin{itemize}
        \item The type of $Y$ becomes $y' = \prod_{i = 1}^n x_i$.
        \item For each $(a,b,c) \in C$, let $p$,$q$,$r$ and $s$ be such that
            \begin{align*}
                y' = p \cdot a \cdot q \cdot b \cdot r \cdot c \cdot s & & \text{ or } & &
                y' = p \cdot c \cdot q \cdot b \cdot r \cdot a \cdot s
            \end{align*}
            (where $p$, $q$, $r$ and $s$ are possibly equal to $1$). $y'$ reduces to $p \cdot d_{a,b,c} \cdot q \cdot b \cdot r \cdot d_{a,b,c} \cdot s$.

    P possible type assignment for the $W$ is:
    \begin{align*}
        W_{a,b,c,1} : s^{-1} d_{a,b,c}^{-1} s p q r (p q r)^{-1} & & W_{a,b,c,2} : (r s)^{-1} b^{-1} r s (p q) (p q)^{-1}
        & & W_{a,b,c,3} : (q r s)^{-1} d_{a,b,c}^{-1} (q r s) p p^{-1}
    \end{align*}
    One can check that the image of this assignment is equal to the assignment from the training sample
    and that it makes the sentences grammatical in the pivotal category.
    \end{itemize}
    Conversely, if there exists a pivotal type assignment, then as the type $b$ does not occur in
    the types assigned to $W_{a,b,c,1}$ and $W_{a,b,c,3}$, there is an $a$ or a $c$ on the right side of the occurrence
    of $b$, and similarly on the left side. But as there cannot be two occurrences of the same basic type in $y'$,
    we have either $a < b < c$ or $c < b < a$.

    Hence the problem is NP-hard. As one can check a solution in polynomial time, the problem is NP-complete.
\end{proof}

\subsection{Learning self-dual pivotal categories from self-dual compact closed categories}

Similarly, the previous proof can also be carried when $a \simeq a^{-1}$, giving
the following theorem:

\begin{thm} \label{thm:commonoid-monoid}
    Type inference from a self-dual compact closed category to a self-dual pivotal category is NP-complete.
\end{thm}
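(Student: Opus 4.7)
The plan is to adapt the reduction from betweenness used in Theorem~\ref{thm:abelian-group}, taking advantage of the fact that passing to the self-dual setting only adds the relation $a \simeq a^{-1}$ on basic types at both sides of the functor $F$, while preserving the planar structure of the syntactic category. Since we already have NP membership by exhibiting a witness and checking grammaticality (planar reduction modulo $a \cdot a \to 1$) in polynomial time, the work is to prove NP-hardness.

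First I would reuse the same generators for the semantic self-dual compact closed category: basic types $\{x : x \in A\} \cup \{d_{a,b,c} : (a,b,c) \in C\}$, with subtyping morphisms $a \to d_{a,b,c}$ and $c \to d_{a,b,c}$, and the distinguished type $y = \prod_{x \in A} x$. The sentence for each triple $(a,b,c) \in C$ is built from the same schema as before, with the understanding that $d_{a,b,c}^{-1}$, $b^{-1}$ and $w^{-1}$ collapse to $d_{a,b,c}$, $b$ and $w$ up to isomorphism. Because cancellations $w \cdot w \simeq 1$ can now trivialise the original gadgets $c_1$ and $c_2$, I would introduce a fresh basic type $z_{a,b,c}$ per triple and replace the cancelling factor $w^{-1}w$ by $z_{a,b,c}$ (or a short padded word in $z_{a,b,c}$), so that the gadget retains enough semantic content to constrain the preimage. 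The reduction remains polynomial.

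For the forward direction, given a total order $x_1 < \dots < x_n$ of $A$ witnessing betweenness, I would define $h(Y,y) = x_1 \otimes \dots \otimes x_n$ and, for each triple, write explicit self-dual pivotal types for the $W_{a,b,c,i}$ mimicking the original construction but with every $(-)^{-1}$ removed and the new $z_{a,b,c}$ threaded through. Checking that $F$ sends these preimages to the required semantic types and that the sentence reduces planarly to $S$ is a direct verification using $\epsilon$'s and the rewriting $a \cdot a \to 1$.

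The backward direction is where the real work lies and is the step I expect to be the main obstacle. Given a grammatical self-dual pivotal preimage, I need to recover an ordering of $A$ satisfying each betweenness triple. The point is that $h(Y,y)$, viewed as a word modulo $a \cdot a \to 1$, must contain each $x \in A$ an odd number of times (since $F$ factors through the commutative Boolean monoid generated by the basic types); after maximal planar cancellation, it reduces to a permutation of $A$, which defines the candidate ordering. The remaining task is to show that the planar matching certifying grammaticality of each triple-sentence forces the single surviving occurrence of $b$ to lie strictly between an occurrence that originates from $a$ or $c$ and one that originates from $c$ or $a$ on the other side, ruling out the possibility that inserted cancelling pairs $a \cdot a$ or $c \cdot c$ are used to ``fake'' a betweenness witness. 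The argument follows the original one almost verbatim, using the fact that in the planar (non-symmetric) self-dual pivotal category no two nested $\epsilon$'s can cross, so any pair inserted purely for cancellation must be planarly matched to itself and cannot participate in the matching to $b$. Once this is established, the two possible orientations yield $a < b < c$ or $c < b < a$, giving the betweenness ordering.
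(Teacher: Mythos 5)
Your high-level plan — reuse the betweenness reduction of Theorem~\ref{thm:abelian-group} and observe that passing to the self-dual setting preserves planarity — is exactly what the paper does. However, the key modification you introduce is based on a misconception, and it would in fact break the reduction.

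You worry that ``cancellations $w \cdot w \simeq 1$ can now trivialise the original gadgets.'' This does not happen. In the free self-dual compact closed category (self-duality \emph{without coherence}, as the paper stipulates following Selinger), the isomorphism $h_A : A \to A^*$ does not make $A \otimes A$ isomorphic to the unit. The morphisms $\epsilon_A \circ (1 \otimes h_A) : A \otimes A \to I$ and $(h_A^{-1} \otimes 1) \circ \eta_A : I \to A \otimes A$ exist, but they are not mutually inverse: composing them yields a closed loop, not the identity. Concretely, $V \otimes V \not\simeq \mathbb{C}$ in $\mathbf{FdVect}$ unless $\dim V = 1$. Hence the multiset of basic types (forgetting duals) is an isomorphism invariant in the free self-dual compact closed category, and $c_1 = d_{a,b,c}^{-1}\, w^{-1} w$ still forces any preimage to contain $d_{a,b,c}$ once and each letter of $w$ twice. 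The gadgets are not trivialised.

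Worse, replacing $w^{-1}w$ by a fresh type $z_{a,b,c}$ (or a short padded word in it) would destroy the \emph{forward} direction. The whole point of the $w^{-1}w$ padding is that the preimage of $c_1$ can be chosen as $s^* d_{a,b,c}^* s\, p q r (p q r)^*$ (and analogously for $c_2$): the letters of $w = pqrs$ are needed to planarly ``route'' $d_{a,b,c}^*$ and $b^*$ past the segments $p,q,r,s$ of $y'$. A fresh type $z_{a,b,c}$ cannot pair with anything in $y'$, so with your modified $c_1 = d_{a,b,c}^* z_{a,b,c}$ the preimage of $W_{a,b,c,1}$ could only be one of the two orderings of $\{d_{a,b,c}^{\pm}, z_{a,b,c}^{\pm}\}$, and the planar reduction of the sentence would fail for any nontrivial permutation $y'$. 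The correct move — and the paper's proof — is to run the reduction of Theorem~\ref{thm:abelian-group} verbatim: the training sample, the preimages, and both directions of the argument all go through once $a \simeq a^*$ is imposed on both sides, because planarity and the multiset invariant are unaffected.

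One further small inaccuracy: $h(Y,y)$ does not merely contain each $x \in A$ ``an odd number of times''; since $y = \prod_{x \in A} x$ and multisets are preserved, it contains each $x$ \emph{exactly once}, so it is literally a permutation of $A$ with no further cancellation needed. Your subsequent discussion of why inserted cancelling pairs cannot fake a betweenness witness is therefore moot — no such pairs can appear in $h(Y,y)$ at all.
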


\subsection{Learning compact closed categories from self-dual compact closed categories}

The problem of grammar induction from a self-dual compact closed category to a
compact closed category bends down to assigning exponents to the types.
It can be reduced to an integer linear
programming problem where we are interested in nonnegative solutions only.
This problem is NP-complete
and we will show that the grammar induction problem itself is actually NP-complete.

\begin{thm} \label{thm:commonoid-abelian}
    Type inference from a self-dual compact closed category to a compact closed category is NP-complete.
\end{thm}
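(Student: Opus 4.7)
The plan is to prove NP-completeness by exhibiting a polynomial-time verifier and by reducing a known NP-complete linear-equation problem, such as positive 1-in-3-SAT, to grammar induction. Membership in NP is straightforward: given a candidate lexicon, check that each syntactic type is functional and has the correct multiset of basic types under $F$, then check that for each sentence and each basic type $X$ the signed sum of $X$-occurrences equals $1$ if $X = S$ and $0$ otherwise. Because the syntactic category is symmetric compact closed, this balance condition is exactly the condition for the product of syntactic types in the sentence to reduce to $S$, so the verifier runs in polynomial time.

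The key leverage for NP-hardness is that the functional constraint on a word with semantic type $X \otimes Y$ (two distinct basic types) forces the two exponents in the syntactic type to be opposite, which lets one propagate sign information across basic types. For each propositional variable $x_j$ I would introduce a dedicated word $V_j$ of semantic type $A_j$, so that $s_j = \sigma_{V_j}(A_j) \in \{-1,+1\}$ encodes the truth value of $x_j$. For each incidence of variable $j$ in clause $c$ I would introduce a proxy word $W_{j,c}$ of semantic type $A_j \otimes B_c$, whose syntactic type must therefore be of the form $A_j^{\alpha} \otimes B_c^{-\alpha}$. An auxiliary sentence containing $V_j$, $W_{j,c}$, a slack word $D_{j,c}$ of semantic type $B_c$, and a grounding word $G$ of semantic type $S$ (whose sign is pinned to $+1$ by the one-word sentence consisting of $G$ alone) forces $\alpha = -s_j$ via the $A_j$-balance, so that $W_{j,c}$ contributes exactly $s_j$ to the $B_c$-balance of any later sentence. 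A second gadget, built from a companion word $C_c$ of semantic type $S \otimes B_c$ together with a token $C'_c$ of semantic type $B_c$, injects, via a similar two-word grounding sentence, constant $\pm 1$ contributions to $B_c$-balances.

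The clause sentence for $c = (x_{j_1}, x_{j_2}, x_{j_3})$ then assembles $G$, a positive $B_c$-token, the three $V_{j_k}$'s and the three $W_{j_k,c}$'s so that its $B_c$-balance becomes $1 + s_{j_1} + s_{j_2} + s_{j_3}$; requiring this to vanish is exactly the 1-in-3-SAT clause constraint $s_{j_1} + s_{j_2} + s_{j_3} = -1$, while the $A_{j_k}$-, $S$-, and inactive $B_{c'}$-balances cancel automatically. The main obstacle will be verifying that the auxiliary sentences force exactly the intended sign relations without spurious couplings or over-determination, and that functionality of every syntactic type we produce is respected; once this is done, polynomiality of the reduction (linearly many auxiliary words and sentences per variable--clause incidence) and correctness in both directions follow directly from the gadget analysis.
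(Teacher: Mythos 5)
Your plan founders on a structural fact about functional types that the reduction must work around, not against. For a word whose semantic type is a single basic type, the only functional preimage under the canonical functor is that basic type itself with exponent $+1$: a lone $X^{-1}$ is not functional (it is neither a generator nor a product with both a $+1$ and a $-1$), and you cannot pad with cancelling pairs because $A \otimes Z \otimes Z^{-1}$ is not isomorphic to $A$ in the free compact closed category. Consequently your $V_j$ (semantic type $A_j$), $T_c$ (semantic type $B_c$), and slack word $D_{j,c}$ (semantic type $B_c$) all have their syntactic types pinned to exponent $+1$. In particular $s_j = \sigma_{V_j}(A_j)$ is forced to be $+1$ for every $j$, so it cannot carry one bit of information; the whole truth-value encoding collapses. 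The auxiliary sentence is also over-determined: with $D_{j,c}$ pinned to $B_c^{+1}$, the $B_c$-balance forces $\alpha = +1$ and hence $s_j = -1$, contradicting the $A_j$-pinning and in any case removing all choice. Your clause sentence would in fact be inconsistent whenever the formula has more than one clause, since the same pinned signs appear everywhere.

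The paper sidesteps exactly this by giving the "truth-carrying" words a semantic type of length three, $v_{i,b} = z_{i,b}\, y_{i,b}\, y_{i,b}$. Here the two $y_{i,b}$ occurrences must receive opposite exponents for functionality, which leaves the single $z_{i,b}$ exponent genuinely free in $\{+1,-1\}$: that free bit is the truth value. Clause satisfaction is then enforced not by a balance equation (as in your 1-in-3-SAT scheme) but directly by the functionality constraint on the clause word's type $d\, z_{i_1,b_1} z_{i_2,b_2} z_{i_3,b_3}$: with the $d$-exponent pinned to $-1$ by a companion word of semantic type $d$, functionality demands at least one $z$-exponent be $+1$, which is precisely the ordinary 3-SAT constraint "at least one literal true." A separate two-$z$ gadget (Lemma~\ref{lemma:negation}) ties $z_{i,0}$ and $z_{i,1}$ to opposite exponents, encoding negation. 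If you want to salvage your route, you would need to replace every single-basic-type word intended to carry a free sign with a word of semantic length at least three of this $z\,y\,y$ shape, and re-derive the balance bookkeeping; at that point you essentially arrive at the paper's construction.
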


\noindent We give a polynomial reduction from 3-SAT to the problem of learning
symmetric pivotal types from self-adjoint (symmetric pivotal) types.
As 3-SAT is NP-complete \cite{karp1972reducibility}, and the learning problem is in NP,
this will complete the proof.
Let $\phi = c_1 \wedge \dots \wedge c_n$ be a conjunction of 3-clauses.
We adopt an approach similar to the strategy of~\cite{cohen2010viterbi}:
\begin{enumerate}[(i)]
  \item We replace each positive occurrence of a variable $x_i$ by
       a new variable $x_{i,1}$ and each negative occurrence by a different
       variable $x_{i,0}$. We add to the training sample one sentence,
       ensuring that all the clauses are satisfied (lemma~\ref{lemma:full-sentence}).
  \item We add one sentence for each pair of variables $x_{i,0},x_{i,1}$,
       ensuring that they are assigned opposite truth values (lemma~\ref{lemma:negation}).
\end{enumerate}
The full training example will hence encode satisfiability, completing the proof.

Let $c = x_{i_1,b_1} \vee x_{i_2,b_2} \vee x_{i_3,b_3}$ be a 3-clause,
where $x_{i,1}$ stands for $x_i$ and $x_{i,0}$ for $\neg x_i$.
For each $i$ and $b$ we define a self-adjoint type $v_{i,b} = z_{i,b} y_{i,b} y_{i,b}$.
Let $t(c) = d z_{i_1,b_1} z_{i_2,b_2} z_{i_3,b_3}$ be a self-adjoint type.
Our idea is that we will force $d$ to have a $-1$ exponent in the corresponding
group type, and hence this group type will be functional if and only if one of
the $z_{a,b}$ occurs with a $+1$ exponent. As a clause is true when at least
one of the literals it contains is true, this will encode satisfiability.

\begin{lemma} \label{lemma:full-sentence}
For each literal $x_i^b, i \in \{1, \dots p\}, b \in \{0,1\}$, let $n_{i,b}$
be the number of occurrences of $x_{i,b}$ in $\phi$.
The assignments making the following sentence grammatical
$$(S,s) \prod_{i=1}^n ( (C_i,t(c_i)) (D_i,d) ) \prod_{i=1}^p \prod_{b=0}^1 (X_i^b,v_{i,b})^{n_{i,b}}$$
are exactly those for which $x_{1,0}, x_{1,1}, x_{2,0}, x_{2,1}, \dots, x_{p,0}, x_{p,1}$
are assigned values making all the clauses true.
\end{lemma}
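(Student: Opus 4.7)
My plan is to rephrase grammaticality as an integer cancellation system over the exponents of the basic types, and then read off satisfiability from the functional-type constraint on $t(c)$. First I would observe that in the compact closed syntactic category, a tensor product of types reduces to $s$ if and only if it equals $s$ in the free abelian group on the basic types. I would introduce as parameters: the exponent $\alpha_{i,b}$ of $z_{i,b}$ in $h(X_i^b,v_{i,b})$, the exponents $\beta_{i,b},\gamma_{i,b}$ of the two $y_{i,b}$ factors, the exponents $\delta_c$ and $\zeta_{c,1},\zeta_{c,2},\zeta_{c,3}$ in $h(C_c,t(c))$, and the exponent $\epsilon$ of $d$ in $h(D,d)$, reading the notation $(D_i,d)$ as a single lexical entry $D$ so that all $n$ occurrences are forced to share the same syntactic type.

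Second, I would extract the cancellation equations, one per basic type other than $s$. The $y_{i,b}$ equation gives $n_{i,b}(\beta_{i,b}+\gamma_{i,b}) = 0$, hence $\{\beta_{i,b},\gamma_{i,b}\} = \{+1,-1\}$, which automatically makes each $v_{i,b}$ functional regardless of $\alpha_{i,b}$. The $z_{i,b}$ equation gives $n_{i,b}\alpha_{i,b} + \sum_{c \ni x_{i,b}}\zeta_c = 0$; since each $\zeta$ lies in $\{\pm 1\}$, this forces every occurrence of $z_{i,b}$ inside a clause type to carry the exponent $-\alpha_{i,b}$. The $d$ equation gives $\sum_c \delta_c + n\epsilon = 0$, and since $\delta_c,\epsilon\in\{\pm 1\}$, the only solutions have $\sum_c\delta_c=\pm n$, so all $\delta_c$ equal a common value $\delta=-\epsilon$. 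This collapse of the per-clause $\delta_c$'s into a single value is the crux and is where the shared-word assumption on $D$ pays off.

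Third, I would apply the functional-type characterization to each $t(c)$. For a clause $c = x_{i_1,b_1}\vee x_{i_2,b_2}\vee x_{i_3,b_3}$, the exponent vector of $h(C_c,t(c))$ is $(\delta,-\alpha_{i_1,b_1},-\alpha_{i_2,b_2},-\alpha_{i_3,b_3})$, and functionality requires both $+1$ and $-1$ to appear. This is equivalent to the condition that at least one $\alpha_{i_k,b_k}=\delta$. Declaring the literal $x_{i,b}$ to be true when $\alpha_{i,b}=\delta$, this states exactly that each clause contains a true literal, i.e.\ the induced $\tau$ satisfies $\phi$; the two possible common values of $\delta$ correspond to the two conventions for naming truth and encode the same satisfiability problem.

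For the converse I would read the construction backwards: given a satisfying assignment $\tau$, set $\delta=-1$, $\epsilon=+1$, $\beta_{i,b}=+1$, $\gamma_{i,b}=-1$, $\alpha_{i,b}=-1$ iff $\tau(x_{i,b})$ is true, and $\zeta_{c,k}=-\alpha_{i_k,b_k}$; then each cancellation equation holds by construction, each $v_{i,b}$ is automatically functional, and each $t(c)$ is functional because satisfaction of $c$ supplies a literal with $\alpha=-1$, yielding a $\zeta=+1$ alongside $\delta=-1$. The main obstacle is not a technical computation but the bookkeeping that isolates the shared-word assumption on $D$ as the unique mechanism turning the otherwise independent per-clause constraints into the single global constraint that matches satisfiability; once this is clear, the rest is routine verification.
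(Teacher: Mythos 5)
Your overall strategy — rewriting grammaticality as a system of cancellation equations over exponents of basic types and then reading off satisfiability from the functional-type characterization of $t(c)$ — is the same idea as the paper's proof, and most of your bookkeeping (the $y_{i,b}$ equation forcing $\{\beta_{i,b},\gamma_{i,b}\}=\{+1,-1\}$, the $z_{i,b}$ equation forcing a single global exponent $-\alpha_{i,b}$ on every occurrence of $z_{i,b}$ in the clause types, the functionality test on the exponent vector of $t(c)$) is right. But there is a genuine error in the step you call ``the crux.''

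You read $(D_i,d)$ as a single shared lexical entry $D$ and attribute the collapse of the per-clause $\delta_c$'s to the shared-word constraint. This is a misreading: the $D_i$ are indexed by $i$ and are \emph{distinct} words, introduced precisely so that each occurrence only interacts with its own clause. The actual mechanism in the paper is different and stronger: the semantic type $d$ is a single basic generator, and by the characterization of functional types a one-letter word with a $-1$ exponent is \emph{not} functional, so the only functional compact-closed preimage of $d$ is $d$ itself. Hence each $D_i$ (whether or not the words coincide) is forced to have syntactic type $d$ with exponent $+1$. The $d$-cancellation equation then reads $\sum_c \delta_c + n = 0$, which forces every $\delta_c = -1$ outright. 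Your version omits the functionality constraint on $h(D,d)$ (and on $h(S,s)$, for the same reason), which is exactly what lets the spurious case $\epsilon = -1$, $\delta = +1$ into your analysis; you then have to argue it away by relabelling truth. That extra case is in fact excluded, so the ``two conventions'' remark is a symptom of the gap rather than a feature. If you keep the paper's distinct $D_i$'s your equation $\sum_c \delta_c + \sum_i \epsilon_i = 0$ no longer forces anything on its own, so the shared-word mechanism is not available and the functionality argument is the only route — which is why the paper flags it explicitly (``the types of the words $S$ and $D_i$ are basic types, so the only functional syntactic types compatible\ldots are these basic types''). Fixing the proof is easy: drop the shared-$D$ assumption, invoke functionality to pin $h(S,s)=s$ and $h(D_i,d)=d$, and the rest of your computation goes through with $\delta=-1$ uniquely.
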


\begin{proof}
Let us show first that if $\phi$ is satisfiable, then the grammar induction problem has a solution.
Let $x_1 = a_1, \dots, x_p = a_p$ be a satisfying boolean assignment.
We give $X_i^b$ the group type $z_{i,b}^e y_{i,b}^{-1} y_{i,b}$, where
$e = -1$ if $b = a_i$ and $e = 1$ otherwise. This type is functional.

We give $C_i$ the type $c^{-1} z_{i_1,b_{i,1}}^{e_1} z_{i_2,b_{i,2}}^{e_2} z_{i_3,b_{i,3}}^{e_3}$
where $e_k = 1$ if $b_{i,k} = a_{i_k}$ and $e_k = -1$ otherwise. As the clause $c_i$ is satisfied,
there is at least one $k \in \{ 1, 2, 3 \}$ such that $b_{i,k} = a_{i_k}$, hence the type is functional.

Let us show that the sentence is grammatical. As the exponent of $z_{i,b}$ in the type assigned
to a clause only depends on $a_i$, there are $n_{i,b}$ occurrences of $z_{i,b}$, with the same exponent,
in $\prod_{j=1}^n (c_j,t_j)$. By construction, the exponent of $z_{i,b}$ is inversed in the type assigned
to $x_i^b$, and there are $n_{i,b}$ such occurrences in the sentence. Hence all the $z_{i,b}$ cancel.
The type $d$ assigned to $D_i$ cancels with $d^{-1}$ in the type assigned to $t(c_i)$, and
the $y_{i,b}$ cancel as well. Hence only $s$ remains: the sentence is grammatical.

Conversely, suppose there are functional group types $r_j$ preimage of $t(c_j)$ and $w_{i,b}$ preimage of $v_{i,b}$
such that the sentences are grammatical at the syntactic level.
Note that the types of the words $S$ and $D_i$ are basic types, so the only functional syntactic types
compatible with the learning problem are these basic types.
As the pregroup type $d$ occurs with exponent $+1$ $n$ times in the product due to the words $D_j$,
all the occurences of $d$ in the $r_j$ have the exponent $-1$, otherwise they would not cancel.
For each $r_j$, it is functional so one of the $z_{i,b}$ has exponent $+1$.
For each $x^b_i$, $z_{i,b}$ occurs $n_{i,b}$ times with the same exponent, thanks to $w_{i,b}$,
and $n_{i,b}$ other times in the clauses, so the exponent assigned to $z_{i,b}$ is the same in every $r_j$.
\end{proof}

\begin{lemma} \label{lemma:negation}
The assignments making the following sentence grammatical
$$(S,s) (X_i^0, v_{i,0}) (Z_i, z_{i,0} z_{i,1}) (X_i^1, v_{i,1})$$
are exactly those for which the $z_{i,0}$ and $z_{i,1}$ in $v_{i,0}$ and $v_{i,1}$
get opposite exponents.
\end{lemma}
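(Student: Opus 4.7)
The plan is to translate grammaticality of this four-word sentence at the syntactic level into a system of linear constraints on the exponents carried by the preimages, and then read off what the functionality requirement forces.

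Since the syntactic compact closed category is symmetric, a preimage of $v_{i,b} = z_{i,b} y_{i,b} y_{i,b}$ can be written, up to reordering, as $z_{i,b}^{a_b} y_{i,b}^{p_b} y_{i,b}^{q_b}$ with $a_b, p_b, q_b \in \{-1,+1\}$; a preimage of $z_{i,0} z_{i,1}$ has the form $z_{i,0}^{\alpha} z_{i,1}^{\beta}$; and the only functional preimage of the basic type $s$ is $s$ itself. I would first record these parameterisations as the candidate syntactic types of $X_i^0$, $X_i^1$, $Z_i$ and $S$.

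Next I would compute, for each basic letter, its total exponent in the tensor product of these preimages. As was already used implicitly in the proof of Theorem~\ref{thm:commonoid-abelian} and in Lemma~\ref{lemma:full-sentence}, grammaticality in a free compact closed category amounts to requiring that every non-$s$ letter has total exponent zero, since balanced letters can then be cancelled through iterated cups. This gives $p_b + q_b = 0$ for $b \in \{0,1\}$, together with $a_0 + \alpha = 0$ and $a_1 + \beta = 0$, so $p_b = -q_b$, $\alpha = -a_0$ and $\beta = -a_1$.

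Finally I would apply the characterisation of functional types (the proposition on functional types and its compact closed corollary) to each preimage. For each $X_i^b$ the two $y_{i,b}$-factors already carry opposite exponents, hence that type is functional whatever the value of $a_b$ is. The type of $Z_i$, on the other hand, is $z_{i,0}^{-a_0} z_{i,1}^{-a_1}$, a product of exactly two distinct basic letters; functionality therefore forces $-a_0$ and $-a_1$ to have opposite signs, i.e.\ $a_0 = -a_1$. Since $a_0$ and $a_1$ are by construction the exponents of $z_{i,0}$ in the preimage of $v_{i,0}$ and of $z_{i,1}$ in the preimage of $v_{i,1}$, this is precisely the claimed characterisation; the converse is immediate by taking any pair $a_0 = -a_1$ and setting $p_b = -q_b$ arbitrarily. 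The only delicate step is the equivalence between grammaticality and balanced exponents, which is the same structural fact underlying the earlier results in this section, so I would invoke it rather than re-derive it.
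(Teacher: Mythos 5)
Your proposal is correct and takes essentially the same approach as the paper: in both, the crux is that the type assigned to $Z_i$ must be functional, which together with the balanced-exponent characterisation of grammaticality in the free compact closed category forces the exponents of $z_{i,0}$ and $z_{i,1}$ in the preimages of $v_{i,0}$ and $v_{i,1}$ to be opposite. You simply spell out the exponent-balance constraints via an explicit parameterisation, whereas the paper exhibits the witnessing assignments directly and leaves that step implicit.
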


\begin{proof}
One can check that the following assignment is valid, for $e = \pm 1$:
$$s \cdot z_{i,0}^e y_{i,0}^{-1} y_{i,0} \cdot z_{i,0}^{-e} z_{i,1}^{e} \cdot z_{i,1}^{-e} y_{i,1}^{-1} y_{i,1} \rightarrow s$$

Conversely, as the type assigned to $Z_i$ has to be functional, $z_{i,0}$ and $z_{i,1}$ get opposite exponents
in any solution of the grammar inference problem.
\end{proof}

\subsection{Learning pivotal categories from self-dual pivotal categories}

The construction of Theorem~\ref{thm:commonoid-abelian} can be adapted to work in non-commutative structures,
hence the following theorem:

\begin{thm} \label{thm:monoid-group}
    Type inference from self-dual pivotal categories to pivotal categories is NP-complete.
\end{thm}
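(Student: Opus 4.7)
The plan is to adapt the construction of Theorem~\ref{thm:commonoid-abelian} to the non-commutative setting by carefully ordering atoms so that the intended reductions remain planar. I keep the reduction from 3-SAT: for each variable $x_i$ introduce atoms $z_{i,0}, z_{i,1}$ and a buffer $y_{i,b}$, and define the self-dual pivotal types $v_{i,b} = z_{i,b} y_{i,b} y_{i,b}$ and $t(c) = d z_{i_1,b_1} z_{i_2,b_2} z_{i_3,b_3}$ for each 3-clause. Since the characterization of functional types in a pivotal category (Proposition just above Corollary 1) still requires both a $+1$ and a $-1$ exponent, forcing the preimage of $d$ to carry exponent $-1$ will again make $t(c)$ functional precisely when at least one $z_{i_k,b_k}$ has exponent $+1$, encoding the satisfaction of $c$.

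I would first dispose of the negation gadget (analogue of Lemma~\ref{lemma:negation}). The sentence $(S,s)(X_i^0,v_{i,0})(Z_i, z_{i,0}z_{i,1})(X_i^1,v_{i,1})$ can be reused verbatim, because the assignment $z_{i,0}^e y_{i,0}^{-1} y_{i,0} \cdot z_{i,0}^{-e} z_{i,1}^{e} \cdot z_{i,1}^{-e} y_{i,1}^{-1} y_{i,1}$ reduces to the empty word by successive cancellations of adjacent inverse letters, which is already valid in a (non-commutative) pivotal category. Conversely, functionality of the preimage of $z_{i,0}z_{i,1}$ still forces $z_{i,0}$ and $z_{i,1}$ to receive opposite exponents.

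The main obstacle is adapting the full-sentence construction (Lemma~\ref{lemma:full-sentence}), whose commutative proof relies on freely matching copies of $z_{i,b}$ coming from the clause words with copies coming from the literal words, irrespective of their positions. My plan is to unfold the block of literal words: for each occurrence of literal $x_{i,b}$ in a particular clause $c_j$, I introduce a fresh word $X_{i,b}^{(j)}$ of semantic type $v_{i,b}$, placed immediately adjacent to $C_j$ inside the training sentence. The pregroup type assigned to $C_j$ then cancels with the types of its adjacent literal words in a planar fashion, with buffer atoms $y_{i,b}$ absorbing the letters of $t(c_j)$ that lie between literal matches. To preserve the requirement that all occurrences of the same literal receive the same exponent of $z_{i,b}$, I chain consecutive literal words via short equality-gadget sentences in the spirit of Lemma~\ref{lemma:negation}, of the form $(S,s)(X_{i,b}^{(j)},v_{i,b})(E,z_{i,b}z_{i,b})(X_{i,b}^{(j')},v_{i,b})$, whose functionality constraint on $E$ forces two occurrences of $z_{i,b}$ to receive matching (rather than opposite) exponents.

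The hard part will be verifying that this planar layout admits a valid pivotal reduction exactly when the SAT instance is satisfiable. The forward direction (satisfying assignment $\Rightarrow$ grammaticality) requires exhibiting explicit functional lifts: I would start from the commutative-case assignments and insert conjugating buffers $u\, y_{i,b}^{-1}\, y_{i,b}\, u^{-1}$ at the boundary between adjacent words so that each $z$-cancellation and each $y$-cancellation can be scheduled in a non-crossing order along the line. The backward direction (grammaticality $\Rightarrow$ satisfying assignment) follows as in the commutative proof: functionality of each clause preimage yields a true literal, and the equality and negation gadgets propagate a consistent boolean valuation across all occurrences. NP-membership is immediate, since a type assignment has polynomial size and its validity is checked by computing a planar reduction, so the reduction from 3-SAT establishes NP-completeness.
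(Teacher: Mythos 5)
Your overall plan follows the paper's (very terse) proof, which simply asserts that the construction of Theorem~\ref{thm:commonoid-abelian} ``can be adapted'' to the non-commutative case. You are right that the negation gadget transfers verbatim: the explicit reduction you write down is already a planar sequence of adjacent cancellations, and the converse direction only uses functionality of the preimage of $z_{i,0}z_{i,1}$ plus non-crossing matching, both of which survive in the pivotal setting. Unfolding the literal words so that each occurrence of a literal sits next to its clause is also a reasonable way to make the clause cancellations local, and your identification of the real obstacle---that the commutative full-sentence lemma relies on freely matching $z_{i,b}$'s across the sentence---is correct.

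However, your equality gadget is broken, and this is a genuine gap. Consider the sentence $(S,s)(X_{i,b}^{(j)},v_{i,b})(E,z_{i,b}z_{i,b})(X_{i,b}^{(j')},v_{i,b})$ and write the preimages as
$$s,\;\; z_{i,b}^{a}\,y_{i,b}^{u_1}\,y_{i,b}^{u_2},\;\; z_{i,b}^{e_1}\,z_{i,b}^{e_2},\;\; z_{i,b}^{a'}\,y_{i,b}^{u_1'}\,y_{i,b}^{u_2'}.$$
Functionality of $E$ forces $e_1=-e_2$. The four occurrences of $z_{i,b}$ sit at positions $2,5,6,7$; the only non-crossing pairings are $\{(2,5),(6,7)\}$ and $\{(2,7),(5,6)\}$, and in both cases the constraints collapse to $a=-a'$. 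So your gadget enforces \emph{opposite} exponents on the two literal copies, exactly as the negation gadget does---it does not enforce equal exponents, contrary to your claim. Consequently, the unfolded construction does not propagate a consistent Boolean valuation, and the reduction from 3-SAT does not go through as written. The natural repair is to chain two negation-style links through a fresh intermediary word $M$ with the same semantic type $v_{i,b}$: one sentence forces $a_{X^{(j)}} = -a_M$ and a second forces $a_M = -a_{X^{(j')}}$, giving $a_{X^{(j)}} = a_{X^{(j')}}$. With that fix, and a concrete (rather than sketched) verification that the unfolded full sentence admits a planar reduction under a satisfying assignment, the argument would be sound; as stated, it is not.
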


\subsection{Composing complexity results}

Suppose we know the complexity of the grammar induction problem
between $\mathcal{C}_1$ and $\mathcal{C}_2$, and between $\mathcal{C}_2$
and $\mathcal{C}_3$. What can be said about grammar induction between
$\mathcal{C}_1$ and $\mathcal{C}_3$?

\begin{figure}[H]
    \centering
    \begin{tikzpicture}[every node/.style={node distance=2cm}]
    \node (c1) {$\mathcal{C}_1$};
    \node[right of=c1] (c2) {$\mathcal{C}_2$};
    \node[right of=c2] (c3) {$\mathcal{C}_3$};

    \draw[latex-] (c2) -- (c1);
    \draw[latex-] (c3) -- (c2);
    \draw[latex-,bend left,dotted] (c3) edge node[below] {?} (c1);
\end{tikzpicture}
\end{figure}

Given a syntactic category $\mathcal{C}$ and a semantic category $\mathcal{C}'$,
we introduce the notion of exact samples.

\begin{defi}
    A training sample is said \textbf{exact} for some syntactic type $t$ when
    it contains a word-type pair $(w,F(t))$ such that for all solutions $h$ of this training sample,
    $h(w,F(t)) = t$.

    We say that a grammar induction problem \textbf{has exact samples} when
    there exists exact samples for each syntactic type $t$.
\end{defi}

In other words, a grammar induction problem has exact samples when we can build sentences
forcing the preimage of a particular type.

\begin{lemma}
    If the grammar induction problem has exact samples,
    then for all finite set of syntactic types $T = \{ t_1, \dots, t_n \}$
    there exists a training sample which is exact for $t_1, \dots, t_n$.
\end{lemma}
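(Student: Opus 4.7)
The plan is to build the required sample as a disjoint union of the given exact samples, after renaming variables so that the exactness constraints of the individual samples cannot interfere with one another.

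First I would invoke the hypothesis to pick, for each $t_i$, an exact training sample $E_i$ together with a witness pair $(w_i, F(t_i))$ forcing $h(w_i, F(t_i)) = t_i$ for every solution $h$ of $E_i$. Since a training sample constrains a lexicon only through the word-type pairs that appear in its sentences, the natural next step is to make the different $E_i$ pairwise independent. I would do this by relabelling: for each $i$, replace every variable $v$ occurring in $E_i$ by a fresh variable $v^{(i)}$, producing a renamed sample $E_i'$ whose witness is $(w_i^{(i)}, F(t_i))$. The renamed sentences are still grammatical for the renamed lexicon, so $E_i'$ is itself an exact sample for $t_i$. Finally, set $E := \bigcup_{i=1}^n E_i'$.

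The central step is then to show that any solution $h$ of $E$ restricts, for each $i$, to a solution $h_i$ of $E_i'$ (equivalently of $E_i$ under the renaming) in the sense of the definition of grammar induction. This is immediate: define $h_i(v,t) := h(v^{(i)}, t)$; the restriction inherits functionality and the condition $F(h_i(v,t)) \simeq t$ from $h$, and each sentence of $E_i'$ is grammatical for $h_i$ because it is a sentence of $E$ and hence grammatical for $h$. Because the variable names in distinct $E_i'$ are disjoint, the choices made by $h$ on the variables of $E_j'$ for $j \neq i$ place no further constraint on $h_i$. Applying exactness of $E_i$ to $h_i$ gives $h(w_i^{(i)}, F(t_i)) = h_i(w_i, F(t_i)) = t_i$, so $E$ is exact for $t_i$ witnessed by $(w_i^{(i)}, F(t_i))$. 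This holds for every $i$, giving the conclusion.

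The main obstacle, and the only subtle point, is precisely the variable clash issue that the renaming resolves. Without the disjoint relabelling, a single variable $v$ might appear in several $E_i$ with distinct intended preimages, and even when this is not literally the case, the extra sentences of $E_j$ ($j \neq i$) could in principle add constraints that eliminate the solution of $E_i$ witnessing exactness, breaking the hypothesis. Making the variable sets disjoint decouples the samples completely and reduces the lemma to a routine transfer of exactness from each $E_i$ to $E$; since the construction adds only $\sum_i |E_i|$ sentences, the resulting sample remains finite, as required.
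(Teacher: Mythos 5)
Your proof is correct and follows the same approach as the paper: take exact samples for each $t_i$, make them disjoint by renaming words/variables, and concatenate. The paper states this in three sentences; you supply the routine verification that the renaming preserves exactness and that any solution of the union restricts to a solution of each renamed sample.
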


\begin{proof}
    Take an exact training samples for each element of $T$. Make these
    training samples disjoint by ensuring that they use different words.
    The concatenation of these training samples satisfies the property claimed.
\end{proof}

\begin{lemma} \label{lemma:exact-comp}
    If grammar induction from $\cC_1$ to $\cC_2$ and from $\cC_2$ to $\cC_3$ has exact samples,
    then so does grammar induction from $\cC_1$ to $\cC_3$.
\end{lemma}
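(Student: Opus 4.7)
The plan is to stitch together an exact sample for $t$ under $F_{12}: \cC_1 \to \cC_2$ with exact samples forcing the intermediate $\cC_2$-types under $F_{23}: \cC_2 \to \cC_3$. Given a syntactic type $t \in \cC_1$, I start from an exact sample $T_{12}$ for $t$ in the $\cC_1 \to \cC_2$ problem, with distinguished pair $(w, F_{12}(t))$. Since $T_{12}$ lives at the $\cC_2$ level, I promote it to a sample $T_{12}'$ at the $\cC_3$ level by replacing every semantic type $t^{(2)}$ by $F_{23}(t^{(2)})$; grammaticality survives because functors of type systems preserve reductions. Used alone, $T_{12}'$ is too weak: a solution $h_{13}: V \times Ob(\cC_3) \to \cC_1$ only has to satisfy $F_{23}(F_{12}(h_{13}(v, F_{23}(t^{(2)})))) \simeq F_{23}(t^{(2)})$, which does not pin down the intermediate $F_{12}$-image.

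To repair this, for each word-type pair $(v, t^{(2)})$ occurring in $T_{12}$ I take an exact sample $T_{23}^{(v,t^{(2)})}$ for $t^{(2)}$ in the $\cC_2 \to \cC_3$ problem, relabelling its distinguished word to $v$ and its remaining words to fresh symbols. The candidate training sample is
$$T_{13} \;=\; T_{12}' \;\cup\; \bigcup_{(v,t^{(2)}) \in T_{12}} T_{23}^{(v,t^{(2)})},$$
and it is again a valid $\cC_3$-level training sample because each of its pieces is.

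To check exactness, I take an arbitrary solution $h_{13}$ of $T_{13}$. The composite $F_{12} \circ h_{13}$ is a solution of each $T_{23}^{(v,t^{(2)})}$ at the $\cC_2 \to \cC_3$ level: semantic compatibility uses $F_{23} \circ F_{12} = F_{13}$, grammaticality survives because $F_{12}$ preserves reductions, and functional types transport because $F_{12}$ is bi-closed. Exactness of $T_{23}^{(v,t^{(2)})}$ then forces $F_{12}(h_{13}(v, F_{23}(t^{(2)}))) = t^{(2)}$ for every pair in $T_{12}$. Defining $h_{12}(v, t^{(2)}) := h_{13}(v, F_{23}(t^{(2)}))$ on those pairs, this identity is precisely the semantic compatibility of $h_{12}$ with $T_{12}$, and grammaticality of each sentence of $T_{12}$ under $h_{12}$ is inherited directly from grammaticality of its image in $T_{12}'$ under $h_{13}$. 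Applying exactness of $T_{12}$ gives $h_{12}(w, F_{12}(t)) = t$, i.e.\ $h_{13}(w, F_{13}(t)) = t$, which is the desired exactness of $T_{13}$.

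The main delicate point is checking that $F_{12} \circ h_{13}$ is a genuine solution of the intermediate instances, and in particular that its values are functional in $\cC_2$: this relies on $F_{12}$ being compatible with the bi-closed functors defining functional types, which is the case throughout the hierarchy of Figure~\ref{fig:hierarchy}. The only other bookkeeping concern is that the same variable $v$ may be paired with several distinct $\cC_2$-types in $T_{12}$; this causes no conflict because the lexicon is indexed by $V \times Ob(\cC_3)$, so distinct $\cC_3$-semantic types attached to $v$ are handled by independent lexicon entries.
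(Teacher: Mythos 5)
Your construction matches the paper's: transport an exact sample along the composite functor and concatenate it with exact samples pinning down the intermediate $\cC_2$-types, and your verification that $F_{12}\circ h_{13}$ solves the intermediate instances (and hence forces the right preimages) is exactly the argument the paper's one-sentence proof leaves implicit. One caveat: you have the paper's convention for ``grammar induction from $X$ to $Y$'' reversed --- in the paper $X$ is the \emph{semantic} category and $Y$ the \emph{syntactic} one (the functor runs $Y\to X$), so the syntactic type $t$ lives in $\cC_3$, not $\cC_1$; your argument is unaffected because it carries over verbatim under the relabelling $\cC_1\leftrightarrow\cC_3$, but for a clean write-up you would want the indices flipped to match the statement. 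Your observations about relabelling the distinguished word and about functional types being preserved by the canonical functors are useful bookkeeping that the paper glosses over.
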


\begin{proof}
    Take an exact sample $S$ for $t$ from $\cC_2$ to $\cC_3$.
    For each type $t'$ occurring in $S$, take exact samples
    for $t'$ from $\cC_1$ to $\cC_2$. Concatenate these samples
    with the image of $S$ under the functor from $\cC_2$ to $\cC_1$.
\end{proof}

\begin{prop} \label{prop:exact-NP}
    If grammar induction from $\cC_1$ to $\cC_2$ has (polynomial) exact samples
    and grammar induction from $\cC_2$ to $\cC_3$ is NP-complete, then
    grammar induction from $\cC_1$ to $\cC_3$ is NP-complete.
\end{prop}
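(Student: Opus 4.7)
The plan is to establish both NP membership and NP-hardness separately.

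For membership, a candidate lexicon $h : V \to \cC_1$ of polynomial size can be verified in polynomial time: one checks that the composite functor $\cC_1 \to \cC_2 \to \cC_3$ maps $h(v)$ to an object isomorphic to $\tau$ for each training pair $(v,\tau)$, and that each sentence admits a reduction to $S$ in $\cC_1$. Both tasks are tractable for the type systems under consideration.

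For NP-hardness I would reduce the NP-complete problem of grammar induction from $\cC_2$ to $\cC_3$. Given a $\cC_2 \to \cC_3$ training sample $T$, I construct a $\cC_1 \to \cC_3$ sample $T'$ by reusing $T$ (both problems share the semantic category $\cC_3$) and appending polynomially many exact samples from $\cC_1 \to \cC_2$. The role of the augmentation is to pin down, via the forcing effect of exact samples, $\cC_1$-preimages for the $\cC_2$-types that could arise in a solution. One direction of correctness is immediate from functoriality: if $h_A$ solves $T'$ in $\cC_1$, then its post-composition with the functor $\cC_1 \to \cC_2$ solves $T$ in $\cC_2$, since semantic compatibility composes ($\cC_1 \to \cC_3$ is the composite $\cC_1 \to \cC_2 \to \cC_3$) and any grammaticality arrow in $\cC_1$ pushes forward to a grammaticality arrow in $\cC_2$.

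The main obstacle is the converse direction: showing that any $\cC_2$-solution $h_B$ of $T$ lifts to a $\cC_1$-solution of the augmented sample $T'$. The difficulty is that $h_B$ is not known at reduction time, so the augmentation must work uniformly across all possible B-solutions. I would exploit two features of our hierarchy. First, syntactic types in these categories have size polynomially bounded by the semantic types they lift (types do not grow under the functors), so the set of $\cC_2$-types that could possibly appear in a B-solution of $T$ is polynomially enumerable; the augmentation can therefore include an exact sample for each candidate preimage. Second, once the $\cC_1$-preimages of a given $h_B$ are enforced by these exact samples, the $\cC_2$-reduction witnessing grammaticality must lift to a $\cC_1$-reduction, because the canonical functors in our hierarchy are quotient-like (adjoining isomorphisms and symmetries rather than genuine new arrows), so grammaticality in $\cC_2$ can be realised in $\cC_1$ by a suitable choice of preimages. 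Making this lifting claim precise for each specific functor is where the routine but delicate work of the proof lies; the polynomial-size guarantee on exact samples then ensures that the reduction itself remains polynomial, completing the argument.
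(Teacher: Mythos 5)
Your proposal has the roles of the categories reversed, and this is not just a labelling issue: it changes the construction. In the paper's convention, ``grammar induction from $\cC_1$ to $\cC_3$'' means the \emph{training sample} lives in $\cC_1$ (the semantic category) and the sought lexicon lives in $\cC_3$ (the syntactic category); the functor goes $\cC_3 \to \cC_2 \to \cC_1$. You can check this against Corollary~\ref{coro:commonoid-group}, where $\cC_1$ is self-dual compact closed, $\cC_2$ is compact closed and $\cC_3$ is pivotal. Thus the two problems ``from $\cC_2$ to $\cC_3$'' and ``from $\cC_1$ to $\cC_3$'' share $\cC_3$ as the \emph{syntactic} target, not as the semantic source as you claim; their training samples live over $\cC_2$ and $\cC_1$ respectively, which are different. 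So ``reusing $T$'' is not possible: the step you omit, and which the paper makes explicit, is to push $T$ forward along the functor $\cC_2 \to \cC_1$ to obtain a sample over $\cC_1$. Your later phrasing carries the same confusion (``$h_A$ solves $T'$ in $\cC_1$'' and ``its post-composition with the functor $\cC_1 \to \cC_2$''): solutions of either problem are $\cC_3$-types, and the only relevant functors run out of $\cC_3$.

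The second, related gap is in your treatment of the converse direction. You propose to enumerate all ``candidate preimages'' and to argue that the functors are ``quotient-like'' so grammaticality can be lifted. Neither is needed, and the enumeration claim is not justified. The point of the exact samples is simpler: they are training samples over $\cC_1$ that may be appended verbatim to $F_1(T)$, and they force, for any $\cC_3$-solution $h$ of $T'$, that the $\cC_2$-image $F_2 \circ h$ recovers exactly the original $\cC_2$-types of $T$ (here $F_1 : \cC_2 \to \cC_1$ and $F_2 : \cC_3 \to \cC_2$). Once the $\cC_2$-projections are pinned down by exactness, a $\cC_3$-solution of $T'$ restricts directly to a $\cC_3$-solution of $T$, and conversely a $\cC_3$-solution of $T$ extends to one of $T'$ by handling the (disjointly-worded) exact samples separately. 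No lifting of arrows from $\cC_2$ to $\cC_1$ and no enumeration of preimages is involved. As written, your argument neither constructs a well-typed sample $T'$ nor establishes the equivalence, so the NP-hardness direction does not go through.
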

\begin{proof}
    Take an instance of SAT. It can be represented as an
    equivalent training sample from $\cC_2$ to $\cC_3$.
    Take the image of this training sample by the functor from $\cC_2$
    to $\cC_1$ and force this new sample to have the original preimages
    in $\cC_2$ by adding an exact sample. This problem has a solution
    if and only if the instance of SAT is satisfiable.
\end{proof}
\vspace{-0.7cm}
\begin{figure}[H]
    \centering
    \subfloat{
    \centering
    \begin{tikzpicture}[every node/.style={node distance=3cm,scale=0.9},scale=0.9]
    \node (c1) {$\mathcal{C}_1$};
    \node[right of=c1] (c2) {$\mathcal{C}_2$};
    \node[right of=c2] (c3) {$\mathcal{C}_3$};

    \draw[latex-] (c2) edge node[above] {exact samples} (c1);
    \draw[latex-] (c3) edge node[above] {exact samples} (c2);
    \draw[latex-,bend left,dotted] (c3) edge node[below] {exact samples} (c1);
\end{tikzpicture}
}
\subfloat{

    \centering
    \begin{tikzpicture}[every node/.style={node distance=3cm,scale=0.9},scale=0.9]
    \node (c1) {$\mathcal{C}_1$};
    \node[right of=c1] (c2) {$\mathcal{C}_2$};
    \node[right of=c2] (c3) {$\mathcal{C}_3$};

    \draw[latex-] (c2) edge node[above] {exact samples} (c1);
    \draw[latex-] (c3) edge node[above] {NP-complete} (c2);
    \draw[latex-,bend left,dotted] (c3) edge node[below] {NP-complete} (c1);
\end{tikzpicture}
}
\end{figure}
\vspace{-0.3cm}
\begin{lemma} \label{lemma:exact-list}
The following grammar induction problems have exact samples:
\begin{itemize}
    \item from self-dual compact closed to self-dual pivotal
    \item from self-dual compact closed to compact closed
    \item from self-dual pivotal to pivotal
    \item from compact closed to pivotal
\end{itemize}
\end{lemma}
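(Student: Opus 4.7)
The plan is, given any syntactic type $t$, to exhibit a polynomial-size training sample containing a distinguished pair $(w, F(t))$ whose only admissible preimage under the lexicon is $t$. The four induction problems split naturally into two pairs according to the nature of the preimage ambiguity: in the pairs ``self-dual compact closed to compact closed'' and ``self-dual pivotal to pivotal'' the functor $F$ identifies each basic type with its dual, so we must fix the \emph{exponents} of every basic occurrence in the preimage; in the pairs ``compact closed to pivotal'' and ``self-dual compact closed to self-dual pivotal'' the functor introduces a symmetry, so we must fix the \emph{linear order} of the factors. For each kind of ambiguity I would build a dedicated gadget and then assemble the case-specific exact sample out of one or more copies.

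For the sign cases, the key anchoring device is the observation, already implicit in the preceding characterization of functional types, that a word whose semantic type is a single basic type $F(a)$ admits a unique functional preimage, namely $a$: the alternative, $a^*$, is ruled out since a length-one product with a single $-1$ exponent is neither a basic type nor a product with mixed exponents. Having thus nailed a specific $+1$ exponent at the anchor, I would propagate it to any designated occurrence in $h(w)$ by chaining negation gadgets modelled directly on the sentence of Lemma~\ref{lemma:negation}: each gadget is a short sentence whose grammaticality forces two chosen occurrences of a basic type to carry opposite exponents, and by composing a short chain of such gadgets from the anchor to the target occurrence in $w$, the exponent of every basic factor of $h(w)$ is fixed to the desired value.

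For the ordering cases, the gadget is essentially the one already used in the proof of Theorem~\ref{thm:abelian-group}. For each cancellation pair dictated by the target type $t = t_1^{e_1} \otimes \cdots \otimes t_n^{e_n}$, I would introduce an auxiliary word whose image under $F$ contains precisely the two factors to be paired, arranged so that only a single planar (non-crossing) matching of adjoints can reduce the sentence to the distinguished type $S$. Concatenating these pairings into one sentence with $w$ at the head leaves no planar freedom except to read off the factors of $h(w)$ in the target order; the self-dual variant works identically, since in the ordering cases exponents already agree under $F$ and need not be tracked.

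The main obstacle I anticipate lies in the sign cases, in the form of a global flip symmetry: a priori one could simultaneously reverse the exponent of every occurrence of a fixed basic type $a$ without altering the image of the sentence under $F$. This is precisely where the functional-type anchor does the crucial work, since the anchor's preimage is rigidly $a$ (not $a^*$), so a global flip would force the anchor itself into a non-functional configuration, which is disallowed. Beyond breaking this symmetry, the only remaining verification is the purely combinatorial check that each gadget has size linear in the description of $t$, so that the overall sample is polynomial, as required for the application in Proposition~\ref{prop:exact-NP}.
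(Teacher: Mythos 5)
Your decomposition into \emph{sign} cases (self-dual compact closed $\to$ compact closed, self-dual pivotal $\to$ pivotal) and \emph{ordering} cases (self-dual compact closed $\to$ self-dual pivotal, compact closed $\to$ pivotal) is exactly the right way to read the paper's one-line proof, and the anchoring observation — that a word whose semantic type is a single generator has a unique functional preimage with exponent $+1$ — is the crucial device that the paper's reductions also rely on (it is the role of $S$ and $D_i$ in Lemma~\ref{lemma:full-sentence}). So at a high level you and the paper take the same route: recycle the gadgets from the NP-hardness reductions.

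That said, both of your gadget sketches skip over an obstruction that requires one more idea from the paper's reductions. In the sign cases, chaining anchors through two-letter auxiliary words always propagates a $-1$ to the target: the anchor contributes $+1$, cancellation in the next word yields $-1$, functionality flips to $+1$, and so on, so every chain of this kind lands on $-1$ at $w$ regardless of its length, and negation gadgets alone only impose \emph{relations} between signs, never an absolute value. The trick the paper actually uses (implicitly in $v_{i,b} = z_{i,b}\,y_{i,b}\,y_{i,b}$) is a repeated-generator pad: an auxiliary word $C$ with semantic type $\{z,u,u\}$ is automatically functional because the two $u$'s must get opposite exponents, leaving $z$'s exponent genuinely free; one then pins that free exponent in a \emph{separate} sentence in which $C$ meets an anchor $(A_z,z)$, and lexicon consistency transports the fixed sign back into the exact-sample sentence. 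Without this pad, the sentences you describe either over-constrain (forcing $w$ to a non-functional all-$(-1)$ assignment) or under-constrain (leaving the sign of each $+1$ factor of $t$ undetermined once one $-1$ has been anchored).

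In the ordering cases, the claim that ``only a single planar matching'' survives is not automatic: if a single auxiliary word carries the full inverse of $F(t)$, its own order ambiguity compensates for any reordering of $h(w)$, so both $t$ and its reversal remain admissible even after adding planarity. To actually pin the order, one must split the cancellation across several short auxiliary words and insert fresh separator generators with anchored (single-letter) words between $w$ and each auxiliary piece, so that the planar matching is forced to nest in exactly one way — this is closer in spirit to the interleaved $p,q,r,s$ blocks in the proof of Theorem~\ref{thm:abelian-group} than to a single pairing word. None of this contradicts your plan; it is the concrete content hiding behind ``arranged so that only a single planar matching can reduce the sentence,'' and it needs to be spelled out for the proof to stand.
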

\begin{proof}
    Use the same techniques as the ones we developped for our reductions.
\end{proof}

\begin{coro} \label{coro:commonoid-group}
    Grammar induction from a self-dual compact closed category to a pivotal category is NP-complete.
\end{coro}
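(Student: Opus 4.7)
The plan is to deduce the corollary by applying Proposition~\ref{prop:exact-NP} to the intermediate category of compact closed categories. I would take $\cC_1$ to be the self-dual compact closed category, $\cC_2$ the compact closed category, and $\cC_3$ the pivotal category, and appeal to the canonical functors between them already fixed in the hierarchy of figure~\ref{fig:hierarchy}. The two ingredients needed are then on the shelf: by Lemma~\ref{lemma:exact-list} the grammar induction problem from $\cC_1$ to $\cC_2$ admits exact samples, and by Theorem~\ref{thm:abelian-group} the problem from $\cC_2$ to $\cC_3$ is NP-complete. Proposition~\ref{prop:exact-NP} then yields NP-hardness of grammar induction from $\cC_1$ to $\cC_3$.

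Concretely, I would start from an arbitrary instance of SAT, convert it into a training sample $T$ from compact closed to pivotal using the reduction built in the proof of Theorem~\ref{thm:abelian-group}, and then push $T$ through the forgetful functor from the compact closed to the self-dual compact closed category to obtain a training sample $T'$. To guarantee that any lifting of $T'$ to the pivotal category recovers the original compact closed preimages (and not some other self-dual lifting), I would append to $T'$, for each syntactic type appearing in $T$, an exact sample provided by Lemma~\ref{lemma:exact-list}. The resulting sample is polynomial in the size of the SAT instance and is satisfiable as a grammar induction problem from $\cC_1$ to $\cC_3$ if and only if the original SAT instance is satisfiable.

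Once NP-hardness is established, membership in NP is immediate: a candidate rigid lexicon $h$ of polynomial size together with, for each sentence, a pivotal reduction witness (a non-crossing diagram of $\epsilon$ and $\eta$ morphisms) can be verified in polynomial time by checking that the images under $F$ agree with the semantic types and that the reductions are well-formed. Combining this with NP-hardness yields NP-completeness.

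The only step that requires any care is the verification that the exact samples from Lemma~\ref{lemma:exact-list} interact cleanly with the SAT-encoding sample from Theorem~\ref{thm:abelian-group}: one must ensure that the disjointness of fresh variables used in each exact sample is preserved when concatenated with $T'$, so that no spurious global constraint is introduced. This is the same bookkeeping already encapsulated in Lemma~\ref{lemma:exact-comp}, so the argument goes through without additional combinatorial work.
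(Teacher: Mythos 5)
Your proposal is correct and follows exactly the same route as the paper: the paper's proof is simply ``Combine Lemma~\ref{lemma:exact-list} and Theorem~\ref{thm:abelian-group} with Proposition~\ref{prop:exact-NP}'', with the same choice of $\cC_1$ (self-dual compact closed), $\cC_2$ (compact closed), $\cC_3$ (pivotal). Your elaboration of how the SAT instance is pushed along the functor and combined with exact samples is a faithful unfolding of the proof of Proposition~\ref{prop:exact-NP} rather than a different argument.
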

\begin{proof}
    Combine Lemma~\ref{lemma:exact-list} and Theorem~\ref{thm:abelian-group} with
    Proposition~\ref{prop:exact-NP}.
\end{proof}
\section{Future work}
A number of questions remain open. Being able to classify the complexity of 
the inference problem in the higher half of the hierarchy would enable us
to give complexity results on the problems studied in \cite{bechet2007learnability}
and \cite{dudau-sofronie2001logic}.

Another issue is the expressivity of the classes of grammars defined by the
categories in the lower half of the hierarchy. These grammars generate sub-classes
of the context-free grammars, but it would be interesting to relate these sub-classes
to known classes from the field of formal languages.

One could also use this framework to study inference problems in which the structure
of a parse is known, but the types are unknown. This notion of learning with structural
examples has been studied for the syntactic calculus \cite{kanazawa1996identification}.

\section*{Acknowledgements}

This work has been supported by the Quantum Group in the Department of Computer Science of the University of Oxford.
I am grateful to Thomas Bourgeat, Amar Hadzihasanovic, Anne Preller and Isabelle Tellier
for their help, and the reviewers for their useful comments.
Special thanks go to Jamie Vicary who helped me a lot by supervising and reviewing my work.

\vspace{-0.3cm}
\bibliographystyle{eptcs}
\bibliography{references}

\end{document}